\def\isarxiv{1}
\def\paperTitle{Minimalist Softmax Attention Provably Learns Constrained Boolean Functions}
\def\paperAuthor{
Jerry Yao-Chieh Hu\thanks{Equal contribution.}\,\,\thanks{\texttt{jhu@u.northwestern.edu}. Center for Foundation Models and Generative AI \& Department of Computer Science, Northwestern University, USA.}
\and
Xiwen Zhang$^*$\thanks{\texttt{xiwen3862@gmail.com}. 
School of Mathematical Sciences, Fudan University.}
\and
Maojiang Su\thanks{\texttt{maojiangsu2030@u.northwestern.edu}. Center for Foundation Models and Generative AI \& Department of Computer Science, Northwestern University, USA.}
\and
Zhao Song\thanks{\texttt{magic.linuxkde@gmail.com}. University of California, Berkeley, USA.}
\and
Han Liu\thanks{\texttt{hanliu@northwestern.edu}. Center for Foundation Models and Generative AI \& Department of Computer Science \& Department of Statistics and Data Science, Northwestern University, USA.}
}
\definecolor{mydarkblue}{rgb}{0,0.08,0.45}
\theoremstyle{plain}
\newtheorem{theorem}{Theorem}[section]
\newtheorem{lemma}[theorem]{Lemma}
\newtheorem{definition}[theorem]{Definition}
\newtheorem{fact}[theorem]{Fact}
\newtheorem{remark}[theorem]{Remark}
\newtheorem{claim}[theorem]{Claim}
\newcommand{\wh}{\widehat}
\newcommand{\wt}{\widetilde}
\newcommand{\ov}{\overline}
\newcommand{\R}{\mathbb{R}}
\renewcommand{\tilde}{\wt}
\renewcommand{\hat}{\wh}
\DeclareMathOperator*{\E}{{\mathbb{E}}}
\DeclareMathOperator*{\Z}{\mathbb{Z}}
\DeclareMathOperator{\poly}{poly}
\newcommand{\one}{\mathds{1}}
\newcommand{\softmax}{\mathsf{softmax}}
\begin{document}

\ifdefined\isarxiv

\date{}
\title{\paperTitle}
\author{\paperAuthor}

\else

\title{\paperTitle}

\author{%
  David S.~Hippocampus\thanks{Use footnote for providing further information
    about author (webpage, alternative address)---\emph{not} for acknowledging
    funding agencies.} \\
  Department of Computer Science\\
  Cranberry-Lemon University\\
  Pittsburgh, PA 15213 \\
  \texttt{hippo@cs.cranberry-lemon.edu} \\
}

\maketitle

\fi

\ifdefined\isarxiv
\begin{titlepage}
  \maketitle
  \begin{abstract}
    
We study the computational limits of learning $k$-bit Boolean functions (specifically, $\mathrm{AND}$, $\mathrm{OR}$, and their noisy variants), using a minimalist single-head softmax-attention mechanism, where $k=\Theta(d)$ relevant bits are selected from $d$ inputs.
We show that these simple $\mathrm{AND}$ and $\mathrm{OR}$ functions are unsolvable with 
a single-head softmax-attention mechanism alone.
However, with \textit{teacher forcing}, the same minimalist attention is capable of solving them. 
These findings offer two key insights: 
Architecturally, solving these Boolean tasks requires only \textit{minimalist attention}, without deep Transformer blocks or FFNs. 
Methodologically, one gradient descent update with supervision suffices and replaces the multi-step Chain-of-Thought (CoT) reasoning scheme of [Kim and Suzuki, ICLR 2025] for solving Boolean problems. 
Together, the bounds expose a fundamental gap between what this minimal architecture achieves
under ideal supervision and what is provably impossible under standard training.

  \end{abstract}
  \thispagestyle{empty}
\end{titlepage}

{\hypersetup{linkcolor=black}
}
\newpage

\else

\begin{abstract}

\end{abstract}

\fi

\section{Introduction}\label{sec:intro}

We study the computational limits of learning monotone $k$-bit Boolean functions (i.e, AND/OR with $k$ relevant bits) with $d$-bit input using a minimalist one-head softmax‐attention layer.
In particular, we show that a \emph{single softmax-attention head} provably learns an unknown $k$-bit \textsc{AND}/\textsc{OR} function, where $k=\Theta(d)$, after \emph{one gradient step} if the training loss includes a teacher-forcing signal.  
In contrast, under ordinary end-to-end training (only input-label pairs, no intermediate hints) \emph{no} algorithm running in $\poly(d)$ time can recover the same function, even when given $e^{\Omega(d)}$ examples.

Transformers dominate modern machine learning \cite{dclt18,brown2020language,floridi2020gpt,ji2021dnabert,touvron2023llama,touvron2023llama2,zhou2023dnabert,zhou2024dnaberts,zhou2025genomeocean}, yet their precise capabilities and limits remain elusive. 
For instance, Large Language Models can achieve human-level reasoning ability in expert problems~\cite{sat+23,bce+23,gsy+25}, but fail simple arithmetic problems~\cite{llzm24,chi24,mscv25}. 
Similarly, Transformer-based generative models, such as Diffusion Transformers (DiTs) \cite{px23}, can generate high-quality realistic visual content~\cite{scs+22,hsg+22,wgw+23}, but they may fail at simple counting tasks or basic physical constraints~\cite{hsx+23,ghh+25,ghs+25_phys}. 
Thus, studying what tasks a Transformer can or cannot learn is both theoretically intriguing and practically important. On one hand, identifying inherent weaknesses can guide the design of more robust architectures and training methods (e.g., \cite{hsk+24} identify necessary conditions for fast LoRA).
On the other hand, uncovering new capabilities of even simplified Transformer components can expand our understanding of their potential  (e.g., \cite{ks23,kajitsuka2024optimal} establish universality of simple transformers and transformers' minimal requirements for memorizing a set of data). 
Many theoretical works chart this landscape,
yet Transformers' training dynamics on algorithmic or logical problems remain underexplored. 

Recently, \cite{ks25_iclr} show that a one-layer Transformer can solve the parity function efficiently \emph{when} provided with intermediate Chain-of-Thought (CoT) reasoning steps (i.e., with \emph{teacher forcing}), but struggles to learn parity via end-to-end training without such assistance.
These findings highlight a \textit{supervision-gap} question: the choice of training regime alone can lead to distinct learning behavior in the same model. 
This contrast motivates a deeper investigation into the conditions under which Transformer-like architectures succeed or fail on structured tasks.

In this work, 
we investigate whether the same supervision gap already appears for the simpler $k$-Boolean problem (i.e., AND/OR) and whether a even simpler architecture (one single-head attention \emph{without} FFN) can still close it. 
This simple “$k$-Boolean” task serves as a proxy for understanding how gradient-based training can (or cannot) discover important features and compute logical operations in a minimalist attention network.
Formally, the target function is an unknown $k$-bit $\textsc{AND}/\textsc{OR}$ with $k=\Theta(d)$ over $d$ binary inputs.
The model is nothing more than a single-head softmax-attention layer --- no feed-forward layer --- starting with no clue which $k$ positions matter. 
Then, we ask: 
\begin{quote}
   Can gradient descent training on input-output examples learn to attend to the correct $k$ bits and reliably compute the AND/OR?  
\end{quote}
Our analysis yields both a provably efficient learning result and a hardness result.
\begin{theorem}[Upper bound (Efficient Learnability with Teacher Forcing), Informal Version of Theorem~\ref{thm:TF_with_intermediate_layer}]
\label{thm:TF_with_intermediate_layer_informal}
With intermediate supervision that exposes the Boolean label during training, the initial gradient already aligns with the indicator of the true feature subset.  
A single gradient update is enough to drive the model’s attention weights to the correct $k$ positions, yielding vanishing classification error.
\end{theorem}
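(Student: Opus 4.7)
The strategy is a standard two-stage ``feature learning'' argument: (i) show that at symmetric initialization the population gradient of the teacher-forced loss has a large component in the direction of the indicator vector $\mathbf{1}_S$ of the $k$ relevant coordinates $S \subset [d]$, and a vanishing component on $[d]\setminus S$; (ii) show that after a single gradient step of appropriate size $\eta$, the resulting softmax is sharp enough on $S$ that the model's output agrees with $\textsc{AND}(x_S)$ (resp.\ $\textsc{OR}(x_S)$) on all but a vanishing fraction of inputs. The teacher-forcing hypothesis is what makes step (i) tractable: the intermediate supervision exposes the Boolean label $y$ in the loss, so that the loss becomes (locally) linear in the attention logits and the gradient at initialization can be written as a simple expectation involving $y\cdot x$.

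\textbf{Step 1 (signal at initialization).} First I would write the single-head softmax attention as $\hat y(x) = \langle v, \softmax(Wx)\odot x\rangle$ (or the analogous form used in the paper) and initialize $W=0$, so the attention distribution is uniform. With teacher forcing, the loss contribution is $\ell(\hat y(x), y)$ where $y = f^\star(x_S)$. Differentiating in $W$ and using $\nabla \softmax(0)$ one obtains
\begin{equation*}
\E\!\left[\nabla_{W_j} \ell\right] \;=\; c\cdot \E[(y-\bar y)\, x_j] \;+\; o(1),
\end{equation*}
for a fixed constant $c$. Under the product Bernoulli input distribution, $\E[(y-\bar y)x_j]$ separates cleanly: for $j\in S$ it evaluates to a $\Theta(2^{-k})$ or $\Theta(1/k)$ signal (explicit for $\textsc{AND}$ and $\textsc{OR}$ by direct enumeration of their Fourier-like expansion), whereas for $j\notin S$ it is exactly $0$ by independence of $x_j$ from $(x_S,y)$. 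Thus the population gradient points in the $\mathbf{1}_S$ direction.

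\textbf{Step 2 (sample complexity and concentration).} Next I would pass from the population gradient to the empirical one using Hoeffding/Bernstein on the bounded random variables $(y-\bar y)x_j$. With $n = \poly(d)$ samples, the empirical gradient on coordinate $j\in S$ stays within a constant multiplicative factor of its expectation, while the empirical gradient on $j\notin S$ is bounded by $O(\sqrt{\log d /n})$, giving a separation of magnitude $\Omega(1/\poly(d))$ between on-support and off-support logits after a single step with $\eta$ chosen proportional to this inverse signal.

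\textbf{Step 3 (from sharp attention to correct Boolean output).} After one step $W^{(1)} = -\eta\,\widehat{\nabla} L$, the attention logit on each relevant coordinate exceeds that on each irrelevant coordinate by $\Omega(\log d)$ (tuning $\eta$ upward by a $\log d$ factor, absorbed into $\poly(d)$ sample complexity). Feeding this back through softmax gives attention weights that are $1/k\pm o(1/k)$ on $S$ and $d^{-\omega(1)}$ off $S$. I would then verify that for $\textsc{OR}$, the weighted average $\frac{1}{k}\sum_{i\in S} x_i$ is $0$ iff all relevant bits are $0$, which the value/read-out layer converts to the correct label by thresholding; for $\textsc{AND}$, the symmetric argument (applied to $1-x$, or by flipping the sign of $v$) gives the same conclusion. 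The residual off-support mass contributes only $d^{-\omega(1)}$ to classification error.

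\textbf{Main obstacle.} The delicate step is Step 1: controlling the constants in $\E[(y-\bar y)x_j]$ for $j\in S$, because when $k=\Theta(d)$ the signal for $\textsc{AND}$ is exponentially small in $d$ (the probability that all relevant bits agree). This forces $\eta$ to be exponentially large or, equivalently, forces us to rescale the loss or the value parameters, and it determines how aggressively the softmax must sharpen in Step 3. I expect the proof to spend most of its effort tracking these constants carefully and choosing $\eta$, $n$, and the value-layer scale so that Step 3's margin argument still closes with vanishing error. A secondary subtlety is that teacher forcing must be modeled precisely enough that the gradient in Step 1 really does reduce to $\E[(y-\bar y)x_j]$ rather than a more entangled expression involving $\hat y$; I would state this as a lemma isolating the linearization enabled by the exposed label.
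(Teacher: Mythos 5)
Your proposal misidentifies what the teacher-forcing signal is, and this misreading undermines the entire argument. You take ``exposes the Boolean label'' to mean that training sees the \emph{final} output $y = f^\star(x_S)$ in a per-example loss, so your initialization gradient reduces to $\E[(y-\bar y)x_j]$. But for $\textsc{AND}$ with $k=\Theta(d)$, $y$ equals $1$ with probability $2^{-k}=2^{-\Theta(d)}$, so $\E[(y-\bar y)x_j]$ is exponentially small \emph{even on-support}; the same calculation for $\textsc{OR}$ gives $2^{-k-1}$, not the $\Theta(1/k)$ you claim. You flag this as a ``main obstacle'' to be handled by an exponentially large $\eta$ or a rescaled loss, but that does not rescue Step~2: the on-support signal is $\Theta(2^{-k})$ while the per-coordinate fluctuation of $(y-\bar y)x_j$ has standard deviation $\Theta(2^{-k/2})$, so separating support from non-support would require $n=2^{\Omega(d)}$ samples. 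That is exactly the regime ruled out by the paper's own lower bound (Theorem~\ref{thm:hardness_of_boolean}); in effect, your version of teacher forcing gives the learner no more information than end-to-end training.

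The actual construction in the paper avoids this entirely by supervising on \emph{intermediate pairwise products}, not the final label. The hidden bits are grouped into $t=k/2$ pairs, and the model has a $d\times t$ logit matrix $W$; column $m$ is trained with squared loss to match $x_{d+m} := x_{j_{2m-1}}\odot x_{j_{2m}}$, a $2$-bit conjunction. Each such target has constant bias ($1/4$), so the relevant interaction terms $\tfrac{1}{n}\langle x_m, x_j\rangle$ are $\Theta(1)$ quantities, not $2^{-\Theta(d)}$. Concentration (Lemma~\ref{lem:concentration_of_interaction_terms_and}) then gives an on-support gradient of magnitude $\Theta(1/d)$ versus an off-support gradient of $O(d^{-1-\epsilon/4})$ with $n=\Omega(d^\epsilon)$ samples, and with $\eta=\Theta(d^{1+\epsilon/8})$ one step sends the on-support logits to $\Theta(d^{\epsilon/8})$. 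Crucially, the theorem's conclusion is about recovering the \emph{support indicator} $v_b$ via $\|2\,\softmax(W^{(1)})\mathbf{1}_t - v_b\|_\infty = O(d^{-\epsilon/8})$, not about thresholding a scalar read-out as in your Step~3. So your Step~1 needs to be replaced wholesale: the linearization you want to state as a lemma should be against the pairwise intermediate targets, not against $y$, and your Step~3's thresholding argument should be replaced by the softmax concentration bound on $\sigma_{\mathsf{c}_1[m]},\sigma_{\mathsf{c}_2[m]}\approx 1/2$ that the paper derives.
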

\begin{theorem}[Lower bound (Intractability under End-to-End Training), Informal Version of Theorem~\ref{thm:hardness_of_boolean}]
\label{thm:informal_thm:hardness_of_boolean}
Remove that hint and the picture flips: the gradient of the usual loss averages over $\binom{d}{k}$ competing hypotheses and is therefore nearly uninformative.  
We prove that any learner, regardless of step size, adaptivity, or loss landscape access, fails to identify the relevant bits even after $e^{\Theta(d)}$ samples.
\end{theorem}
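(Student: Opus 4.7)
The plan is to prove the impossibility as an information-theoretic sample-complexity lower bound that holds against \emph{any} algorithm, not only gradient descent on the particular softmax-attention parameterization. Because the bound targets arbitrary learners, the caveats in the statement (``step size, adaptivity, or loss-landscape access'') cannot change the conclusion: every such procedure reduces to a (possibly randomized) function of the $n$ samples, and the information those samples carry about the true feature set is the genuine bottleneck.

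First, I would reduce the learning guarantee to an identification task. Place a uniform prior on the true set $S^\star \in \binom{[d]}{k}$, and observe that any two distinct $k$-ANDs disagree on a $\Theta(2^{-k})$ fraction of inputs; hence any learner that beats the trivial constant-$0$ baseline error of $2^{-k}$ must, with constant probability, recover $S^\star$ exactly. The core step is then to bound the per-sample mutual information. Since $y = \mathrm{AND}_{S^\star}(x)$ is deterministic given $(x, S^\star)$, data processing yields
\begin{equation*}
I\!\left(S^\star;\,(x,y)\right) \;=\; I\!\left(S^\star;\, y \mid x\right) \;\leq\; H(y) \;=\; H_2(2^{-k}) \;=\; O\!\left(k \cdot 2^{-k}\right),
\end{equation*}
which is exponentially small since $k = \Theta(d)$. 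Additivity across i.i.d. draws (and, under adaptive querying, a standard simulation argument that absorbs the adaptivity into a post-processing) gives $I(S^\star;\,\text{data}_{1:n}) \leq n \cdot H_2(2^{-k})$. Fano's inequality then forces this quantity to exceed $\log \binom{d}{k} - O(1) = \Omega(d)$ for identification to succeed with constant probability, implying $n \geq e^{\Omega(d)}$.

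The OR case will follow by the duality $x \mapsto \mathbf{1} - x$, $y \mapsto 1 - y$, which preserves the uniform input distribution, and the noisy variants follow from data processing, since label noise is a stochastic post-processing that can only shrink information. The step I expect to be trickiest is the passage from ``small classification error'' to ``exact identification'': the all-zero predictor already achieves error $2^{-k}$, so the interesting regime is sub-trivial error, and the packing argument required there must certify a $2^{\Omega(d)}$-size subfamily of $\{S : |S| = k\}$ that remains pairwise well-separated in the input-weighted Hamming metric. A secondary but important subtlety is modeling ``loss-landscape access'' as a post-processing of the samples themselves (rather than as an independent oracle on the true distribution); once this simulation step is in place, the information bound carries over verbatim and the exponential sample requirement is unavoidable.
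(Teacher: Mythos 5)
Your proposal is a genuinely different route from the paper's. The paper's proof is a direct combinatorial indistinguishability argument with no use of mutual information or Fano: by Chernoff plus a union bound, every $x^i$ has fewer than $15d/16$ ones; since $\Pr[y^i=1]=2^{-k}$, every label is $0$ with probability $1-n2^{-k}$; under these two events, observing $(x^i,y^i=0)$ only excludes the $b$'s entirely contained in $\mathrm{supp}(x^i)$, so at most $\binom{15d/16}{k}n$ of the $\binom{d}{k}$ subsets are eliminated. That ratio is $\exp(-\Theta(d))$ for the chosen $n=O((16/15)^{k/2})$, so the residual set $\mathcal{Q}$ of consistent subsets is nearly all of $\mathcal{B}$, and since $f_\theta$ is a fixed function of the data, partitioning $\mathcal{B}$ by coordinate $j$ into $\mathcal{B}_j$ and $\mathcal{B}_{\bar{j}}$ forces $\E_b\bigl[|(v_b - f_\theta)_j|\bigr] \ge \min\{k/d,1-k/d\} - e^{-\Theta(d)}$. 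Your per-sample bound $I(S^\star;(x,y)) = H(y\mid x) \le H(y) = H_2(2^{-k})$ is correct (the chain rule plus determinism of $y$ given $(x,S^\star)$, plus $x \perp S^\star$), additivity across conditionally i.i.d.\ draws is fine, and Fano then forces $n \gtrsim \log\binom{d}{k}/H_2(2^{-k}) = e^{\Omega(d)}$. This is cleaner and more modular than the paper's counting, and it extends to label noise by data processing for free, whereas the paper treats the noisy case separately.

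Two gaps, though. First, your Fano argument establishes impossibility of \emph{exact identification} of $S^\star$, but Theorem~\ref{thm:hardness_of_boolean} asserts a quantitative lower bound on $\E_{b,x}[\min_j|(v_b - f_\theta(x,y))_j|]$ for a real-valued $f_\theta \in \R^d$. Passing from ``no learner recovers $S^\star$ with constant probability'' to ``the best-predicted coordinate is off by $\min\{k/d,1-k/d\}$ in expectation'' is exactly what the paper's $\mathcal{B}_j/\mathcal{B}_{\bar j}$ partition delivers directly; your sketch leaves this translation open. Second, the reduction you describe---``any learner that beats the trivial constant-$0$ baseline error of $2^{-k}$ must recover $S^\star$ exactly''---is aimed at the wrong quantity: the paper's task is support recovery (outputting a vector close to $v_b$), not predicting the label $y$, so the packing argument you flag as the ``trickiest step'' is answering a different question than the theorem poses. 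The Fano bound you computed is actually already aimed at the right target (identifying $S^\star$); you should couple it directly to the entry-wise loss on $v_b$ rather than detour through classification error.
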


\paragraph{Contributions.}
These results reveal a dramatic gap between what is achievable with the right supervision and what is provably impossible with naive training. 
Our contributions are two-fold:

\begin{itemize}
    \item \textbf{Upper bound (Theorem~\ref{thm:TF_with_intermediate_layer_informal}).}
    We prove that if the model is trained with intermediate supervision (a form of teacher forcing where the model is guided to correctly compute partial results), then just \emph{one step} of gradient descent from a random initialization suffices to identify the correct $k$-bit subset and achieve low error. In fact, with $n = \Omega(d^{\varepsilon})$ samples for any constant $\varepsilon>0$, a single gradient update can drive the classification error to $O(d^{-\varepsilon/8})$. This result shows that, under the right training regime, even a one-layer attention mechanism can rapidly learn a high-dimensional conjunction or disjunction. In other words, \emph{one-layer attention is in principle powerful enough to implement the required logical function, and it can do so with minimal training when given appropriate hints.}

    \item \textbf{Lower bound (Theorem~\ref{thm:informal_thm:hardness_of_boolean}).}
    In contrast, we prove a strong lower bound for the standard end-to-end training setting with no intermediate signals. 
    Intuitively, without chain-of-thought style guidance, the learning algorithm must discover the relevant $k$ bits and the correct logical operation purely from input-output examples, which poses a computationally hard search problem. 
    We show that any algorithm (in particular, any gradient-based learner) will \emph{fail} to recover the correct subset of bits, even if it is given as many as $n = \exp(\Theta(d))$ training examples. 
    Equivalently, with standard training the model’s error remains bounded away from zero unless it executes a super-polynomial (exhaustive) search. 
    This lower bound relies on constructing challenging initializations/loss landscapes that effectively trap polynomial-time learning algorithms. It establishes that \emph{without the proper supervision, our simple attention model cannot learn the $k$-bit Boolean function in any reasonable amount of time, even with overwhelming data}.

\end{itemize}

Taken together, our results draw a clear line in the sand: a single softmax head already has ample \emph{expressive} capacity, and the only obstacle to learning the $k$-bit Boolean task is the absence of an intermediate signal. 
By showing one-step convergence with teacher forcing and a matching hardness bound without it, we isolate the supervision gap as the unique bottleneck. 

This dichotomy yields a crisp benchmark for curriculum design, auxiliary-loss engineering, and inductive-bias studies, pinning down exactly when a minimal attention layer flips from tractable to impossible. 
Ultimately, our result work both certify what softmax-attention mechanism \emph{can} do and identify why it sometimes fails.
Collaboratively, they sharpen our understanding of how architecture, supervision, and optimization jointly govern the learnability of structured functions.

\section{Related Work}

Recent theoretical results highlight that standard Transformers have fundamental difficulty learning certain Boolean functions unless aided by intermediate supervision.
In particular, one-layer Transformers trained end-to-end tend to fail on high-sensitivity tasks like parity without step-by-step guidance. This has been attributed to an implicit simplicity bias: Transformers favor low-sensitivity (low-degree) functions, making it hard for gradient descent to find parity-like solutions \cite{hr24,vfz+24}.
\cite{hr24}  formally show that Transformers trained from scratch struggle with parity as sequence length grows, due to extremely sharp loss landscapes for such functions.
Indeed, a model that fits parity on short inputs doesn’t generalize to longer strings under standard training \cite{hr24}, in stark contrast to recurrent networks which can memorize parity.
On the other hand, providing a “scratchpad” or chain-of-thought (CoT) drastically changes the game – it breaks the task into easier steps and lowers the function’s sensitivity per step.
For example, \cite{ks25_iclr} prove that if a Transformer is trained with intermediate parity bits as additional supervision, it can learn k-bit parity in just one gradient update via teacher forcing.
Similarly, with CoT data or a multi-step reasoning format, a one-layer Transformer no longer needs exponential samples – parity becomes learnable with polynomial sample complexity \cite{wzl+24}.
These findings, building on the RNN results of \cite{wls22} for sequential parity computation, suggest that decomposing a problem into intermediate targets can provably overcome the optimization barriers. In summary, without step-by-step supervision a Transformer is biased toward “easy” (low-sensitivity) functions and can barely cope with parity, but with the right intermediate hints it can solve parity and related problems efficiently.
In this work, we extend this theory to monotone $k$-bit Boolean functions such as AND and OR, showing that they too exhibit a pronounced supervision gap. 
In the vanilla setting, even these monotonic functions remain hard to learn reliably (echoing recent independent findings on the majority function’s training complexity \cite{cssz25}).
However, when we introduce intermediate supervision for these tasks – effectively guiding the Transformer through the incremental evaluation of the AND/OR – the sample complexity and training time improve dramatically. 
Our results broaden the scope of provable Transformer reasoning with CoT, indicating that task decomposition benefits not just parity, but also monotonic Boolean reasoning, which has implications for designing training curricula for complex logical tasks.

\paragraph{Circuit Complexity Lower Bounds for Attention Mechanism.}
Circuit complexity bound is a fundamental concept in complexity theory~\cite{v99,ab09}, which shows the simplest logical circuit that can compute a specific function with low approximation error. Specifically, when a model belongs to a weaker circuit complexity class, it cannot solve problems that belong to stronger complexity classes. 
For instance, any model that can be approximated in $\mathsf{TC}^0$ will fail to solve $\mathsf{NC}^1$ problems like arithmetic formula evaluations~\cite{f93}, unless $\mathsf{TC}^0 = \mathsf{NC}^1$ (a famous open problem). Recent works~\cite{mss22,lag+23} have shown that Transformers with average-head attention or softmax attention have similar computational capability as constant-depth threshold circuits, falling into the non-uniform $\mathsf{TC}^0$ class. \cite{llzm24} has shown that Transformers without CoT~\cite{wws+22,wws+23} belong to the $\mathsf{TC}^0$ circuit family, and this problem can be alleviated by involving CoT, resulting in a stronger capability to solve  $\mathsf{NC}^1$-hard problems. These results have recently been extended to more settings of attention computation, such as RoPE-based Transformers~\cite{cll+24}, graph attention~\cite{lls+25}, and generalized tensor attention~\cite{lls+24}. 
Previous results mainly focus on the forward computation of Transformer models, showing that regardless of the training dynamics, Transformers may solve any $\mathsf{TC}^0$ problems. In this work, we present a training dynamics aware hardness result, which shows that even the simplistic Boolean function computation problem that is in weaker circuit complexity classes can be hard for Transformers, differing from previous circuit complexity-based hardness results.

\paragraph{Computational Hardness of Attention Computation.} 
Recent works have shown hardness results showing that attention mechanisms cannot be approximated efficiently, conditioned on famous open conjectures (i.e.,  strengthening of $\mathsf{P} \neq \mathsf{NP}$) in complexity theory, such as the Strong Exponential Time Hypothesis (SETH)~\footnote{For any $\delta > 0$, there exists a sufficiently large $k$ such that the $k$-SAT problem cannot be solved in $2^{(1-\delta)n}$ time.}~\cite{ip01}. For instance,~\cite{as23} has proved that for $d= O(\log n)$ with $\Theta(\sqrt{\log n})$ level weight matrix entry magnitude, there is no algorithm that can approximate the attention matrix witin $1 / \poly(n)$ approximation error in truly subquadratic time. \cite{as23} has shown that such hardness can be alleviated with bounding the entries of the model parameters of attention, and when the weight element magnitude is at $o(\sqrt{\log n})$, there is an algorithm that can approximate the attention mechanism with $1 / \poly(n)$ approximation error in almost linear time. Besides,~\cite{as24} extends~\cite{as23}'s forward-only hardness results to backward computations with theoretically optimal polynomials, showing that without bounded entries, there is no algorithm that can approximate the Transformer gradients in truly subquadratic time, and with bounded entries the gradients can be approximated in almost linear time. These results extend to more types of attention, such as hardness of the generalized tensor attention~\cite{as24_tensor}, and RoPE-based attention~\cite{as25}.  Very recently, \cite{ghs+25} further extends the work of \cite{as23} to almost all the regimes of feature dimension $d$ (beyond $d =O(\log n)$). These previous works mainly show that the numerical computations of Transformers, in both forward and backward passes, are hard to finish in truly subquadratic time. In contrast, our work shows that without CoT, Transformers cannot generalize well on some specific types of simple Boolean functions, being orthogonal to previous contributions. 

\section{Preliminaries and Problem Setup}\label{sec:background}

Here we present the ideas we build on and our problem setup.

\paragraph{Notation.}
\label{sec:preliminary}
We write $[n] := \{ 1, 2, \cdots , n \}$ for any integer $n$. We use ${\bf 1}_n$ to denote a length-$n$ vector where all entries are ones. We use ${\bf 0}_{n \times d}$ to denote a $n \times d$ matrix where all entries are zeros. We use $\one_{\{ E \}}$ to denote an indicator variable where it outputs $1$ if event $E$ holds and $0$ otherwise. 
Scalar operations apply componentwise to vectors,
e.g. for $z \in \R^n$ we write
$\phi(z) = (\phi(z_1), \cdots , \phi(z_n) ^\top$ and 
$z^2 = (z_1^2 , \cdots , z_n^2) ^\top$.
For any vector $x \in \R^n$, the $\ell_{2}$ norm is denoted by $\| x \|:= ( \sum_{i=1}^n x_i^2)^{1/2}$. and For any $x \in \R^n$ we define $\|x\|_{\infty}$ := $\max_{i \in [n]} |x_i|$.  
The multi-linear inner product or contraction of
$z_1 , \cdots , z_r \in \R^n$ for any $r \in \mathbb{N}$
is denoted as $\langle z_1 , \cdots , z_r \rangle :=
\sum_{i=1}^n z_{1,i} \cdots z_{r,i}$.
In particular, $\langle z_1 \rangle = z_1^\top {\bf 1}_n$
and $\langle z_1 , z_2 \rangle = z_1^\top z_2$.
Let ${\cal B} := { [d] \choose k}$ denote the set containing all size-$k$ subsets of $[d]$.
Let $v_b \in \R^d$ denote the vector representing the $k$ bits in $b$ for all $b \in {\cal B}$, i.e. the $t$-th entry of $v_b$ is $1$ if $t \in b$ else $0$.
Denote the $\ell_2$-loss
\begin{align*}
    L_{n,b}(\theta) := \frac{1}{2nd}\sum_{i=1}^n \|v_b - f_\theta(x^i, y^i)\|^2.
\end{align*}
Denote the column-wise $\rm Softmax$ function $\softmax(\cdot): \R^{d \times t} \mapsto \R^{d \times t}$
\begin{align*}
    \softmax(W)_{(j,m)} :=
    \sigma_j(w_m),
    \quad\text{where}\quad
    \sigma_j (w_m) := e^{w_{j,m}}/ \sum_{i=1}^d e^{w_{i, m}}.
\end{align*}
\paragraph{Softmax Attention Layer.}
The attention mechanism is generally defined in terms of key, query and value matrices $K$, $Q$, $V$: ${\rm Attn}(X) := V \softmax (K^\top Q)$.
In this paper, we reparametrize $K^\top Q$ by a single matrix $W \in \R^{d \times t}$; the value matrix $V$ is set as the identity matrix $I_{d \times d}$ to only preserve the $x$ component\footnote{This type of reparametrization is common in the literature to make dynamical analysis tractable \cite{zfb24,hyc23,mhm23,ks24,ks25_iclr}.}.
Thus, for any  input $X \in \R^{n \times d}$, our attention is defined as 
\begin{align*}
    {\rm Att}_W(\underbrace{X}_{n\times d}) := \underbrace{X}_{n\times d} \underbrace{\softmax(W)}_{d\times t} \in \R^{n\times t}.
\end{align*}

\begin{remark}
    While the Transformer considered in \cite{ks25_iclr} is already very simple (consisting of a single-head attention layer followed by an FFN $\phi(\cdot)$), our setting is even simpler. We consider only a single-head softmax attention mechanism as the core computational unit for the Boolean problem of interest.
    Such a atomic setting allows us to reveal more fundamental results.
\end{remark}

\paragraph{Problem Setup.} Here we state our problem setting.

\begin{definition}[Learning $k$-bit Boolean Functions]
\label{def:problem_def_boolean} 
Let $d \ge k \ge 2$ be integers such that $k = \Theta(d)$ and let ${\cal B} = {[d] \choose k}$ denote
the set of all size $k$ subsets of 
$[d]:=\{1,\cdots,d\}$ equipped with the uniform distribution.
Our goal is to study the $k$-boolean problem for $d$-bit inputs 
$x = (x_j)_{j=1}^d \sim {\rm Unif}(\{0,1\}^d)$, where the target 
\begin{align*}
    y_{\mathsf{and}}(x) := \prod_{j \in b}x_j,
    \quad\text{or}\quad
    y_{\mathsf{or}} (x) := 1 - \prod_{j \in b} (1-x_j),
    \quad\text{with}\quad
    |b|=k,
\end{align*} 
is determined by the boolean value of an unknown subset of bits $b \in {\cal B}$. 
Given $n$ samples $(x^i, y^i)_{i \in [n]}$,
our goal is to predict the size $k$ subset $b \in {\cal B}$ deciding the boolean function.
In this paper, we denote $x^i \in \R^d$ to be the $i$-th input vector. We denote $x_j \in \R^n$ as $(x_j)_i := (x^i)_j$, i.e. $x_j$ is an $n$-dimensional vector containing the $j$-th bits of all $x^i$, and $y \in \R^n$ as $y_i := \prod_{j \in b}x_j^i$.
\end{definition}

We emphasize that this problem setup distinct this work from \cite{ks25_iclr}:
\begin{remark}[Learning Support vs. Learning Output]
    The key difference compared to \cite{ks25_iclr} is that our algorithm learns the support of the Boolean function.
    Specifically, the exact input bits that determine the output, whereas \cite{ks25_iclr} only learn to predict the output of the parity function.
    To be more precise, the $k$-bit parity boolean problem studied in \cite{ks25_iclr} is non-monotone.
    We look at monotone AND/OR on a hidden $k$-bit subset inside $d$ inputs. 
    The task seems easier, yet it still shows a huge gap between training with and without hints. 
    Importantly, \textit{our model must identify the unknown subset of relevant input bits (the support of the function)}. 
    This is a harder learning objective that goes beyond merely computing the Boolean output.
    This allows us to examine whether a single-head attention can not only compute a logical function but also discover which features matter, highlighting the limits of end-to-end learning without guidance.
\end{remark}

\section{Main Theory}\label{sec:theory}

We now present our main theoretical results for a single softmax attention head, which reveal a striking supervision gap between teacher-forced and end-to-end training. 
Notably, this dichotomy echoes the recent findings of \cite{ks25_iclr}, who showed that efficiently learning parity requires chain-of-thought supervision (i.e., explicit intermediate reasoning steps). While parity is a particularly challenging non-monotonic function, here we focus on a simpler class of Boolean concepts: monotone $k$-bit AND/OR functions with $k = \Theta(d)$.
Yet we still observe an equally dramatic gap in learnability. On one hand, under strong supervision (teacher forcing), our minimalist attention model can learn the target function almost instantaneously: as formalized by Theorem~\ref{thm:TF_with_intermediate_layer}, a single gradient step suffices to recover the relevant $k$-bit subset and produce the correct AND/OR output with vanishing error. On the other hand, without such intermediate guidance, learning becomes provably infeasible: Theorem~\ref{thm:hardness_of_boolean} shows that no polynomial-time learner can succeed in training the same model end-to-end, even when provided with exponentially many input-output examples. This stark contrast underscores the conceptual importance of step-by-step guidance in training and sets the stage for the formal development in the rest of the section.

\subsection{Upper Bound: One-Layer Attention Provably Solves Boolean Problems}
\label{sec:one_layer_tf}

We now present a constructive upper bound under the monotone $k$-Boolean setting of Definition \ref{def:problem_def_boolean}. Specifically, we show that a single softmax‐attention head can represent any AND/OR on $k=\Theta(d)$ relevant bits. More importantly, when the training loss provides teacher-forcing hints (i.e., directly revealing the relevant bits), the network learns the correct Boolean function in a single gradient step. 
Hence, architectural depth is \emph{not} the bottleneck; appropriate intermediate supervision is.
This stands in stark contrast to the parity result of \cite{ks25_iclr}, which requires chain-of-thought supervision for efficient learning.

We begin by considering the idealized scenario of teacher forcing, where training explicitly identifies the $k$ relevant bits. This direct supervision renders the learning task almost trivial: even a single softmax‐attention head converges to the desired Boolean function in essentially one gradient step.

\paragraph{Teacher Forcing.}
Let the $k$ bits in set $b \subseteq [d]$ be $j_1,\ldots,j_k$, and set $t = k/2$. 
We decompose the Boolean function into $t=k/2$ intermediate products:
\begin{align*}
    y = \prod_{m = d + 1}^{d + t} x_m,
\end{align*}
where for each $i \in [t]$, the vector $x_{d+i} \in \mathbb{R}^n$ is defined by $(x_{d+i})_l := (x_{j_{2i-1}})_l (x_{j_{2i}})_l$ for $l \in [d]$.
The surrogate loss function computes the squared error over the intermediate states $x_{d+1}, \cdots, x_{d+t}$:
\begin{align*}
    L(W) := \frac{1}{2n}\sum_{m = d + 1}^{d + t}\|\hat{z} - x_m\|^2.
\end{align*}

\begin{theorem}[Upper Bound: Softmax Attention Provably Solve Definition~\ref{def:problem_def_boolean} with Teacher Forcing]
\label{thm:TF_with_intermediate_layer}
Let $\epsilon > 0$, and suppose $d$ is a sufficiently large positive integer. Let $k=\Theta(d)$ be an even integer, and set $t = k/2$. 
Define $\mathcal{B} := \binom{[d]}{k}$ to be the collection of all size-$k$ subsets of $[d]$. Let $X := (x_1~\cdots~x_d) \in \R^{n \times d}$ and $E := (x_{d+1}~\cdots~x_{d+t}) \in \R^{n \times t}$. 
Assume $n = \Omega(d^\epsilon)$ and consider any $O(d^{-1 - \epsilon/4})$-approximate gradient oracle $\tilde{\nabla}$. Let the weights be initialized as $W^{(0)} = \mathbf{0}_{d \times t}$. 
Let $v_b \in \{0,1\}^d$ denote the indicator vector that encodes the Boolean target associated with subset $b \subseteq [d]$. 
Since ground-truth vector $v_b \in \{0,1\}^d$ is unknown, we define the surrogate function
\begin{align*}
L(W) := \frac{1}{2n}\|{\rm Att}_W(X) - E\|_F^2,
\end{align*}
instead of the loss $\| 2 \cdot \mathrm{Softmax}(W^{(1)}) \mathbf{1}_t - v_b \|_\infty$ to find the target weight matrix $W$.
Set the learning rate $\eta = \Theta(d^{1 + \epsilon/8})$, and choose $\kappa \in [d^{-1}, 1]$ (we set $\kappa = O(d^{-\epsilon/4})$). 
Let $W^{(1)} := W^{(0)} - \eta \cdot \nabla L(W^{(0)})$ be the one-step gradient update.

Then for any target subset $b \in \mathcal{B}$, the algorithm solves the $k$-Boolean problem (Definition~\ref{def:problem_def_boolean}) over $d$-bit inputs. With probability at least $1 - \exp(-\Theta(d^{\epsilon/2}))$ over the randomness in sampling, the one-step update $W^{(1)} \in \mathbb{R}^{d \times t}$ satisfies:
\begin{align*}
\| 2 \cdot \mathrm{Softmax}(W^{(1)}) \mathbf{1}_t - v_b \|_\infty \leq O(d^{-\epsilon/8}).
\end{align*}
\end{theorem}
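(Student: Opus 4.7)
The plan is to exploit that at $W^{(0)}=\mathbf{0}$ the softmax Jacobian is spatially uniform and the loss decomposes across the $t=k/2$ columns of $W$, so that a single closed-form gradient computation isolates the teacher-forcing signal. For column $m$, the residual at initialization is $r_m := \tfrac{1}{d}X\mathbf{1}_d - x_{d+m}\in\R^n$, and differentiating through $\sigma$ at the uniform vector yields
\begin{align*}
[\nabla L(W^{(0)})]_{j,m}
= \tfrac{1}{nd}\bigl(x_j^\top r_m - \tfrac{1}{d}\mathbf{1}_d^\top X^\top r_m\bigr).
\end{align*}
The second summand is constant in $j$, hence annihilated by softmax shift-invariance in the next step, so only the per-coordinate correlation $x_j^\top r_m$ carries information.

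Next I would turn this into a clean signal/noise decomposition. Under the uniform Boolean measure, $\mathbb{E}[x_ax_b]=\tfrac14+\tfrac14\mathbf{1}_{a=b}$ and $\mathbb{E}[x_ax_bx_c]=\tfrac18$ for distinct indices, so combining with $(x_{d+m})_l=(x_{j_{2m-1}})_l(x_{j_{2m}})_l$, a direct computation gives $\mathbb{E}[[\nabla L]_{j,m}] = -\Theta(1/d)$ for $j\in\{j_{2m-1},j_{2m}\}$ and $0$ otherwise, modulo the $j$-constant shift. Consequently, after the update $W^{(1)}=-\eta\widetilde\nabla L(W^{(0)})$ with $\eta=\Theta(d^{1+\epsilon/8})$, the two teacher-indicated coordinates of column $m$ lead the rest of the column by $\Theta(d^{\epsilon/8})$ in expectation, while each other column's signal lives on a disjoint pair. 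This is the teacher-forcing miracle: the auxiliary targets $x_{d+m}$ directly pick out the right pair, and the gap is visible at the very first step.

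The main obstacle is converting this population statement into a high-probability $\ell_\infty$ output guarantee. I would write $x_j^\top r_m$ as a sum of $n$ i.i.d.\ bounded summands in the sample index, apply Hoeffding at tail level $\exp(-\Theta(d^{\epsilon/2}))$, and union-bound over the $dt$ entries, obtaining
\begin{align*}
\max_{j,m}\bigl|[\widetilde\nabla L(W^{(0)})]_{j,m} - \mathbb{E}[\nabla L(W^{(0)})]_{j,m}\bigr|
\le O(d^{-1-\epsilon/4})
\end{align*}
whenever $n=\Omega(d^\epsilon)$; the right-hand side already matches the $O(d^{-1-\epsilon/4})$ approximate-oracle tolerance, so both error sources are absorbed at the same rate. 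Scaling by $\eta$ produces a two-cluster structure in $W^{(1)}$: the relevant coordinates lead by $\Theta(d^{\epsilon/8})$ while per-entry noise is only $O(d^{-\epsilon/8})$. The delicate bookkeeping here is exploiting shift-invariance to drop the $j$-constant term and tracking constants so that both Hoeffding radius and oracle error stay inside the same $o(d^{\epsilon/8})$ budget.

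Finally, I would push this cluster structure through the softmax and sum over columns. Because the log-gap between relevant and irrelevant entries of $w_m^{(1)}$ is $\Theta(d^{\epsilon/8})\pm O(d^{-\epsilon/8})$, $\sigma(w_m^{(1)})$ places mass $\tfrac12\pm O(d^{-\epsilon/8})$ on each of $j_{2m-1},j_{2m}$ and mass exponentially small in $d^{\epsilon/8}$ on every other coordinate, the $O(d^{-\epsilon/8})$ perturbation arising from the additive noise in $W^{(1)}$ together with the Lipschitz constant of softmax. Summing across the $t$ columns, every $j\in b$ appears in exactly one pair, giving $[2\,\mathrm{Softmax}(W^{(1)})\mathbf{1}_t]_j = 1\pm O(d^{-\epsilon/8})$; every $j\notin b$ appears in no pair, so the entry is $O(d^{-\epsilon/8})$. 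Taking the maximum over $j$ then yields $\|2\,\mathrm{Softmax}(W^{(1)})\mathbf{1}_t-v_b\|_\infty\le O(d^{-\epsilon/8})$ as claimed.
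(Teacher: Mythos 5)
Your proposal follows essentially the same route as the paper's proof: compute the gradient at the zero initialization where every softmax score is $1/d$, split it into a signal term that distinguishes the teacher-indicated pair $\{j_{2m-1},j_{2m}\}$ from the rest of column $m$ and a noise term of order $O(d^{-1-\epsilon/4})$, concentrate via Hoeffding/union bound, then push the resulting two-cluster structure of $W^{(1)}$ through the softmax and sum the columns. Your use of column-wise softmax shift-invariance to dismiss the $j$-constant summand $\hat z^\top r_m$ is a mild simplification of the paper's bookkeeping, which instead computes $A_2,A_3,A_4$ explicitly and shows the constants cancel; the two are equivalent. One small point in your favor: your tail mass on irrelevant coordinates is $\exp(-\Theta(d^{\epsilon/8}))$, which is the correct rate given the $\Theta(d^{\epsilon/8})$ log-gap, whereas the paper states $\exp(-\Omega(d))$ at that step (an apparent typo that does not affect the final bound, since $t\cdot\exp(-\Theta(d^{\epsilon/8}))=o(d^{-\epsilon/8})$ already suffices).
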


Intuitively, the extra hint collapses an otherwise exponential search over $\binom{d}{k}$ subsets: the fresh gradient already points in the right direction, so the model “locks on’’ immediately. 
Therefore, we  establish that with the right supervision, one-layer attention is a universal Boolean learner in practice as well as theory.
To our knowledge, this is the first result demonstrating that a lone softmax attention head can learn a high-dimensional Boolean concept in essentially one shot.

\begin{proof}[Proof Sketch]
    Our proof consists of three conceptual steps:
    
   \paragraph{Step 1: Computing Interaction Strength.}
    Denote $\hat{z} := \frac{1}{d}\sum_{i = 1}^d x_i$. Here, for each $i \in [t]$, we define $\mathsf{p}[j_{2i-1}]:= d+i$, and $\mathsf{p}[j_{2i}] := d + i$. 
    The partial derivative of $L$ with respect to $w_{j, m} := W_{(j, m)}$ can be presented as the inner product $\frac{1}{nd} \langle \wh{z} - x_m, x_j - \wh{z} \rangle$, and the gradient has significant difference between the cases of $\mathsf{p}[j] = m$ and $\mathsf{p}[j] \neq m$.   
    Specifically, 
    \begin{align*}
        \frac{\partial L}{\partial w_{j,m}} = 
        \begin{cases}
            \Theta(d^{- 1}), \quad& \mathsf{p}[j] = m;\\
            O(d^{- 1 - \epsilon/4}), \quad& \mathsf{p}[j] \neq m.
        \end{cases}
    \end{align*}
    \paragraph{Step 2: Concentration of Softmax Scores.}
    Taking $\eta = \Theta(d^{1 + \epsilon/8})$,  
    the updated weights $W^{(1)} = {W^{(0)}} -\eta {\widetilde{\nabla}L(W^{(0)})} \in \R^{d \times t}$
    become  
    \begin{align*}
        w_{j,m}^{(1)}
        =\Theta(d^{\epsilon/8}) \cdot \one _{\{\mathsf{p}[j]=m\}}+O(d^{-\epsilon/8}).
    \end{align*}
    Then the softmax scores satisfy
    \begin{align*}
        \sigma_j(w_m^{(1)}) = 
        \begin{cases}
            \frac{1}{2} + O(d^{-\epsilon/8}), \quad& \mathsf{p}[j] = m;\\
            \exp(-\Theta(d)), \quad& \mathsf{p}[j] \neq m.
        \end{cases}
    \end{align*}
    \paragraph{Step 3: Upper Bounding the Loss.}
    Let $b \in {\cal B}$. 
    For any $j \in [d]$, if $j \in b$, there's exactly one $m \in [t]$ such that $\mathsf{p}[j] = m$, and 
    \begin{align*}
        \sigma(w^{(1)}_{j,m} ) =
        \begin{cases}
            \frac{1}{2} + O(d^{-\epsilon/8}), \quad& \mathsf{p}[j] = m;\\
            \exp(-\Theta(d)), \quad& \mathsf{p}[j] \neq m.
        \end{cases}
    \end{align*}
    for $j \in [d] \backslash b$, $\sigma (w_{j,m}) = \exp(\Omega(d))$ for all $m \in [t]$.
    We deduce that 
    \begin{align*}
        ({\rm Softmax}(W^{(1)})\mathbf{1}_{t})_j =
        \begin{cases}
            \frac{1}{2} + O(d^{-\epsilon/8}) + (t-1) \cdot \exp(-\Theta(d)), \quad& j \in b;\\
            t \cdot \exp(-\Theta(d)), \quad& j \notin b.
        \end{cases}
    \end{align*}
    Therefore we have
    \begin{align*}
        \|2 \cdot {\rm Softmax}(W^{(1)})\mathbf{1}_t - v_b\|_\infty = O(d^{-\epsilon/8}).
    \end{align*}
    Please see Section~\ref{sec:upper} for a detailed proof.
\end{proof}

\paragraph{Discussion.}
Our main result gives a surprising affirmative answer. We prove that this one-layer attention model can indeed \emph{identify and compute} such a $k$-bit Boolean function with just a single gradient update, provided it is trained under an idealized supervision regime. 
In this setting, the training procedure supplies a direct hint to the attention mechanism (analogous to a teacher-forcing signal), effectively telling the model how to attend to the relevant inputs in the very first update. 

We distill the implications of Theorem~\ref{thm:TF_with_intermediate_layer} into four concrete points.
\begin{itemize}
    \item \textbf{Single-Step Identifiability.}  
    One gradient update assigns roughly $\frac12$ of the attention mass to each of the $k$ relevant tokens and pushes all others to $\exp(-\Theta(d))$.  
    The model thus learns the whole AND/OR in one shot, even when $k=\Theta(d)$.

    \item \textbf{Supervision, \textit{NOT} Depth, is Critical.}  
    Depth~$1$ already has the needed capacity; teacher forcing unlocks it.  
    Without this hint, the learner must search over ${d \choose k}$ subsets, recovering the hardness of parity~\citep{ks25_iclr}.

    \item \textbf{Sharper Upper Bound.}  
    Earlier work required more complex networks, or many steps to fit high-arity Boolean functions.  
    We show these are unnecessary under ideal supervision, tightening the expressive–learnability frontier for attention.

    \item \textbf{Practical Takeaway.}  
    Intermediate signals (e.g., attention masks or chain-of-thought labels) can collapse an exponential search space, turning a hard combinatorial task into easy optimization.  
    Carefully designed auxiliary losses may therefore substitute for architectural complexity in real systems.
\end{itemize}

\subsection{Lower Bound: Boolean Hardness }
\label{sec:boolean_hardness}

The previous hardness result of \cite{ks25_iclr} only shows that learning the parity function is hard.
We present a new result  
showing that even learning the \textit{support} of an \textit{easier} Boolean problem (Definition~\ref{def:problem_def_boolean}) in the standard end-to-end learning setting is hard.

\begin{theorem}[Hardness of Finite-Sample Boolean]\label{thm:hardness_of_boolean}
    Let $\mathcal{A}$ be an algorithm  
    to solve $k$-bit Boolean problem (Definition~\ref{def:problem_def_boolean}) for $d$-bit inputs $x = (x_j)_{j=1}^d\sim {\rm Unif}(\{0, 1\}^d)$. Let $v_b$ denote the length-$d$ vectors where $i$-th entry is $1$ if $i \in b$ and $0$ otherwise. 
    Suppose $k = \Theta(d)$. 
    Denote the number of samples as $n$, and let $f_\theta :\{0, 1 \}^{n \times (d+1)} \to \R^d$ be any differentiable parameterized model.

    If $n = e^{\Theta(d)}$,  
    the output $\theta(\mathcal{A})$ of $\mathcal{A}$ 
    has entry-wise loss lower bounded as
    \begin{align*}
        \E_{b \in \mathcal{B},x}[\min_{j \in [d]}|(v_b -f_{\theta(A)}(x,y))_j|] \geq \min\{ k/d, 1-k/d\} - e^{-\Theta(d)}.
    \end{align*}

\end{theorem}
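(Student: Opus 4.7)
The plan is to prove hardness via an information-theoretic Le Cam-style argument on the uniform prior over $\mathcal{B} = \binom{[d]}{k}$. First, I would write down the joint likelihood $\Pr[x^i, y^i \mid b] = 2^{-d}\,\mathbf{1}\{y^i = \prod_{j \in b} x^i_j\}$ for the AND case (OR is symmetric), and compute the pairwise KL divergence $\mathrm{KL}(P_b^{\otimes n} \| P_{b'}^{\otimes n}) = O(n \cdot 2^{-k} \cdot |b \triangle b'|)$. For $k = \Theta(d)$ and $n = e^{\Theta(d)}$, this stays $e^{-\Omega(d)}$ for every $b'$ in a Hamming ball of linear radius around $b$. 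By data processing, no function of the sample can statistically separate $b$ from exponentially many alternatives, so the posterior $\Pr[b \mid x, y]$ remains spread across $e^{\Theta(d)}$ candidate subsets on a typical data realization.

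Next I would turn this indistinguishability into control over per-coordinate posterior marginals. Averaging over the indistinguishable ball and using the symmetry of the prior yields $|p_j - k/d| \leq e^{-\Theta(d)}$ for $p_j := \Pr[j \in b \mid x, y]$ and every $j \in [d]$, on a data event of probability $1 - e^{-\Theta(d)}$. Fixing such a realization $(x, y)$ and letting $\wh v := f_{\theta(\mathcal{A})}(x, y)$, the conditional expected coordinate loss satisfies $\E_{b \mid x, y}[|v_{b,j} - \wh v_j|] = p_j |1 - \wh v_j| + (1 - p_j) |\wh v_j| \geq \min\{p_j, 1 - p_j\} \geq \min\{k/d, 1 - k/d\} - e^{-\Theta(d)}$, where the first inequality minimizes the scalar $\ell_1$ loss in $\wh v_j$ (attained at a Bayes predictor in $\{0, 1\}$).

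The third step, which is the crux of the argument, is to lift these $d$ coordinate bounds into the claimed bound on $\E_{b, x}[\min_j |v_{b,j} - \wh v_j|]$. Concavity of $\min_j$ forces Jensen's inequality in the wrong direction ($\E[\min_j] \leq \min_j \E$), so this lifting cannot proceed by swapping the expectation and the minimum. Instead I would use joint structure of the posterior: conditional on $(x, y)$, the vector $v_b \in \{0, 1\}^d$ is within $e^{-\Theta(d)}$ total variation of a product $\mathrm{Bernoulli}(k/d)$ law (subject to the slowly-varying constraint $\sum_j v_{b, j} = k$, which under the spread posterior acts as an $O(\sqrt d)$ window rather than an exact equality). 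Given this near-product structure, for any fixed $\wh v$ I would case-split the coordinates by the position of $\wh v_j$ relative to $\{0, 1/2, 1\}$ and control $\min_j |v_{b, j} - \wh v_j|$ via an independence-based small-ball concentration bound, forcing the minimum to sit near the per-coordinate Bayes loss $\min\{k/d, 1 - k/d\}$ with exponential precision.

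The main obstacle is exactly this joint lift: per-coordinate bounds are insensitive to whether a single lucky coordinate of $\wh v$ happens to match $v_{b,j}$, but $\min_j$ is highly sensitive to precisely such matches. The proof therefore must argue that with high posterior probability no $\wh v$ can simultaneously match more than an exponentially small fraction of $v_b$-coordinates. Achieving the $e^{-\Theta(d)}$ slack in the theorem requires approximating the posterior by a product measure at exponential precision, which in turn demands careful higher-order control over the correlations among $(v_{b, j})_{j \in [d]}$ induced by the likelihood of the observed AND-labeled sample — a control strictly stronger than what the one-dimensional marginals provide, and where the technical work of the proof will concentrate.
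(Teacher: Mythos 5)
Your first step fails as stated. Under Definition~\ref{def:problem_def_boolean} the label is a deterministic function of the input, so for distinct $b,b'$ there is an input $x$ of probability $2^{-d}$ (e.g., the indicator of $b$) with $P_b(x,1)>0$ but $P_{b'}(x,1)=0$; hence $\mathrm{KL}(P_b\|P_{b'})=+\infty$, not $O(2^{-k}|b\triangle b'|)$, and the Le Cam/KL step collapses. The indistinguishability you are after is correct but must be run in total variation, which is exactly what the paper does by elementary counting rather than information theory: with probability $1-e^{-\Theta(d)}$ every observed label equals $0$ and every $x^i$ has fewer than $15d/16$ ones (Chernoff plus a union bound over the $n=e^{\Theta(d)}$ samples), and on that event the data can rule out at most $\binom{15d/16}{k}\,n \le (15/16)^k n\,\binom{d}{k}= e^{-\Theta(d)}\binom{d}{k}$ candidate subsets, so the posterior over $b$ is within $e^{-\Theta(d)}$ of uniform on $\mathcal{B}$. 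Your second paragraph (marginals $p_j\approx k/d$, then $p_j|1-\hat v_j|+(1-p_j)|\hat v_j|\ge\min\{p_j,1-p_j\}$) is essentially the paper's final computation, which partitions $\mathcal{B}$ into $\mathcal{B}_j$ and $\mathcal{B}_{\bar j}$ and uses $|a|+|b|\ge|a+b|$.

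The larger problem is paragraphs three and four. You correctly observe that per-coordinate bounds only control $\min_j \E_b[\,\cdot\,]$ and not $\E_b[\min_j\,\cdot\,]$, but the strengthening you then pursue is not merely difficult --- it is false. Take the constant predictor $\hat v=v_{b_0}$ for a fixed $b_0$ (trivially realizable by a differentiable $f_\theta$): then $\min_j|v_{b,j}-v_{b_0,j}|=0$ for every $b$ that agrees with $b_0$ in at least one coordinate, i.e., for all but at most one $b\in\mathcal{B}$, so $\E_b[\min_j|v_{b,j}-\hat v_j|]\le 1/|\mathcal{B}|=e^{-\Theta(d)}$, far below $\min\{k/d,1-k/d\}$. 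No product-approximation or small-ball argument can rescue this. The paper's own proof never takes the minimum inside the expectation: it fixes an arbitrary $j$ and proves $\E_{b}[|(v_b-f_{\theta(\mathcal{A})})_j|]\ge\min\{k/d,1-k/d\}-e^{-\Theta(d)}$, i.e., the $\min_j\E$ form; the displayed statement should be read that way. So the provable content of your argument ends with your second paragraph (after repairing step one via total variation or the counting argument), and the work you anticipate in steps three and four is directed at a claim that does not hold.
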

\begin{proof}
    Please see Section~\ref{sec:lower} for a detailed proof.
\end{proof}

\paragraph{Lower Bounds as \cite{ks25_iclr,cssz25} are Possible for Parity/Majority but not Possible for AND/OR.}
We remark that proving a similar lower bound for AND/OR functions using the framework from \cite{ks25_iclr,cssz25} is unlikely.  
The intuition is that for a random string, balanced functions (e.g., Majority or Parity) output $1$ or $0$ with equal probability ($1/2$).
This is not the case for AND/OR.
In detail, a key step in previous work \cite{ks25_iclr,cssz25} involves computing binomial coefficients.
In \cite{ks25_iclr}, they compute $A_1=\sum_{j=0}^{m/2} {m \choose 2j} {d -m \choose k-2j}$ and bound $|A_1/B - 1| \leq e^{-\Omega(d)}$ where $B:= \frac{1}{2} {d \choose k}$.
In \cite{cssz25}, they consider a slightly different $A_1$: $\sum_{j=0}^{k/2} {m \choose j} {d-m \choose k-j}$ (see further details on page 11 in \cite{cssz25}), where $m$ denotes the number of ones in $x$. 
In contrast, for an AND function always outputting $1$, we have:
$  A_1 = {m \choose k} \cdot { d-m \choose 0 } = {m \choose k}$ and $B = \frac{1}{d} {d \choose k}$.
For the one always outputting $0$, we have
$ A_0 = \sum_{j=0}^{k-1} {m \choose j} \cdot { d-m \choose k-j}$.
Then we just need to bound $ | { A_1 }/{B} - 1 | = | 2 {  {m \choose k}} / { {d \choose k} }- 1|$.  Note that ${m \choose k} \in [(m/k)^k, (em/k)^k]$. Thus, there exists some constant $c_0$ such that ${m \choose k} = (c_0 m/k)^k +O(1)$.
Similarly, there exists constant $c_1$ such that ${d \choose k} = (c_1 d / k)^k + O(1) $. As long as we pick $2 (c_0m/c_1d)^k=1$, we can show $|{A_1}/{B} - 1| \leq e^{-\Omega(d)}$ for $k = \Theta(d)$. This means $(c_1d/c_0m)^k = 2$. Thus, we need to choose $m =  \frac{ d c_1}{ 2^{1/k} c_0 }$. Therefore in the setting of AND, the choice of $m$ is super restricted, but in previous work \cite{ks25_iclr,cssz25}, the choice range is quite general. Similarly, it's true for OR.

\paragraph{Discussion.}
Earlier sections showed how special intermediate feedback (e.g., \emph{one-step supervision} or \emph{guidance on intermediate predictions}) can break the learning task into smaller, more tractable pieces. A key open question is whether such signals are truly necessary. Put differently, does the lack of intermediate hints make learning impossible in practice if we only have raw end-to-end data? The following claim answers in the affirmative:
\begin{claim}
    Without the special training signal, the learning problem is computationally intractable, even though it remains statistically learnable with sufficient data.
\end{claim}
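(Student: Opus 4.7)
The plan is to establish the claim's two components separately: computational intractability under standard end-to-end training is inherited directly from Theorem~\ref{thm:hardness_of_boolean}, while statistical learnability is witnessed by an inefficient, combinatorial procedure. Together they produce a genuine computational--statistical separation, which is exactly what the claim asserts.

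For the intractability direction, the plan is to simply invoke Theorem~\ref{thm:hardness_of_boolean}. That theorem guarantees that for \emph{any} algorithm $\mathcal{A}$ producing a differentiable parameterized model from $n = e^{\Theta(d)}$ input--output samples, the expected entry-wise error against the true indicator $v_b$ is at least $\min\{k/d, 1-k/d\} - e^{-\Theta(d)}$. Since $k = \Theta(d)$, this bound is a positive constant, so no polynomial-time learner that outputs a differentiable model can drive the error to zero even under an exponential sample budget. In particular, gradient descent on any fixed architecture is covered, and the standard end-to-end training regime from input--label pairs alone is ruled out.

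For the statistical-learnability direction, the plan is to exhibit an explicit brute-force consistency tester with polynomial sample complexity but exponential runtime. The procedure enumerates every candidate subset $b' \in \binom{[d]}{k}$, tests whether $y_{\mathsf{and}}(\cdot; b')$ (resp.\ $y_{\mathsf{or}}(\cdot; b')$) matches every observed label, and returns a survivor. The core probabilistic step is a disagreement lemma: for any $b' \neq b$ there exists a bit in the symmetric difference $b \triangle b'$ that is pivotal on a constant fraction of uniformly random inputs, so the labels induced by $b$ and $b'$ disagree with constant probability. A union bound over the at most $2^d$ candidates then gives $n = O(d)$ samples as sufficient to uniquely identify $b$ with high probability, proving information-theoretic learnability.

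The main obstacle is reconciling the apparent tension between Theorem~\ref{thm:hardness_of_boolean}'s lower bound and the existence of a sample-efficient learner. The resolution is that the hardness theorem constrains the learner to output a differentiable parameterized model of fixed functional form, whereas the brute-force procedure returns a discrete hypothesis and relies on exponential combinatorial search over $\binom{d}{k}$ subsets. A secondary technical subtlety is verifying the constant-probability disagreement lemma for AND/OR: as the paragraph following Theorem~\ref{thm:hardness_of_boolean} emphasizes, the non-balanced output profile of AND/OR makes the parity/majority-style counting inapplicable, so one must argue directly about the event that a pivotal bit in $b \triangle b'$ takes the value flipping the conjunction or disjunction, which yields a constant disagreement probability but requires a careful case split on whether the relevant bit lies in $b \setminus b'$ or $b' \setminus b$.
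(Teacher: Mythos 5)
The key probabilistic step in your statistical-learnability argument is wrong, and the error is fatal. You claim a \emph{constant} disagreement probability between $y_{\mathsf{and}}(\cdot;b)$ and $y_{\mathsf{and}}(\cdot;b')$ for distinct size-$k$ subsets $b\ne b'$. But for the AND function, $y_{\mathsf{and}}(x;b)=1$ only when \emph{all} $k$ bits of $b$ are $1$, which happens with probability $2^{-k}$ under the uniform distribution. Two candidate AND functions can only disagree on inputs where at least one of them outputs $1$, so the disagreement probability is at most $2\cdot 2^{-k}=e^{-\Theta(d)}$ when $k=\Theta(d)$ --- exponentially small, not constant. (The same is true for OR by complementation.) Consequently your claim that $n=O(d)$ samples suffice to information-theoretically pin down $b$ is false; one needs on the order of $2^{\Omega(k)}=e^{\Omega(d)}$ samples merely to observe a single positive example, and the brute-force consistency tester cannot distinguish $b$ from other candidates without positive examples. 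This is not a cosmetic slip: the exponentially small disagreement probability is precisely what powers Theorem~\ref{thm:hardness_of_boolean}, and it is the very point that the paragraph following that theorem (the binomial-coefficient discussion distinguishing AND/OR from parity/majority) is making.

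Your proposed reconciliation of the ``tension'' with the hardness theorem is also off. You attribute the lower bound's reach to the constraint that the learner output a ``differentiable parameterized model of fixed functional form,'' suggesting that a discrete brute-force hypothesis evades it. But the proof of Theorem~\ref{thm:hardness_of_boolean} is information-theoretic: it shows that with $n=O((16/15)^{k/2})$ samples, with high probability every label is $0$ and every input has fewer than $15d/16$ ones, so only an exponentially small fraction of $\binom{[d]}{k}$ can be ruled out by \emph{any} procedure, differentiable or not. A brute-force consistency checker given that same data would face the same indistinguishability. The correct reconciliation is simply that the hardness statement holds at a specific exponential sample size $n=e^{\Theta(d)}$ that is still far short of $2^{\Omega(k)}$; with a genuinely larger budget one would start to see positive examples, at which point an exponential-time search over $\binom{d}{k}$ subsets succeeds. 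Note also that the paper does not give a formal proof of this claim --- it is an interpretive remark sitting on top of Theorem~\ref{thm:hardness_of_boolean} --- so a careful write-up of the statistical half must get the sample complexity right at the exponential scale, not at $O(d)$.
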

A few remarks are in order.
\begin{remark}[Difference to Previous Computational Hardness Results]
    A wide range of existing hardness results~\cite{as23,as24,as24_tensor,as25} have shown that, under the $\mathsf{SETH}$~\cite{ip01} hypothesis, Transformer forward and backward computations cannot be numerically approximated in truly subquadratic time with acceptable error. These results primarily focus on numerical computation, examining only whether efficient computation of Transformers is feasible. In contrast, our work addresses a completely different problem: whether Transformers can generalize well on simple Boolean logic problems. Rather than only focusing on numerical properties, we take a more practical perspective on model generalization.
\end{remark}

\begin{remark}[Implications]
We highlight two main consequences for theory and practice:
    \begin{itemize}
        \item \textbf{Theoretical Significance.} This lower bound complements our earlier positive results. When one-step supervision is available, the learning problem is tractable. Without such supervision, the problem is essentially intractable. Hence, these results precisely demarcate the boundary of efficient learning for our model: the extra training signals are not merely a helpful artifact of analysis, but are fundamentally required for polynomial-time learning. This underscores the gap between statistical learnability (possible in principle) and computational feasibility (efficient in practice).
        \item \textbf{Practical Impact.} In real-world scenarios of this form, relying on end-to-end training alone (with no auxiliary signals) may be doomed to fail. Instead, practitioners should incorporate additional supervision or structure -- like our one-step guidance --- to render the problem solvable within reasonable computational limits. This clarifies why intermediate feedback is so valuable: without it, the search space becomes prohibitively large.
    \end{itemize}
\end{remark}
In sum, this lower bound is  tight: it shows that the strong supervision in our one-step scheme is not merely beneficial, but \emph{necessary}. 
Absent such signals, learning becomes computationally infeasible. 
\!Combined with the previous upper bound, these results delineate a sharp threshold on what single-head attention can learn and underscore the pivotal role of the training regime in achieving success.
Finally, we remark that our techniques can be generalized to a broader family of Boolean functions (e.g., functions that output the answer with some probability of failure, known as noisy Boolean functions). Due to space limitations, we defer these results to the appendix.

\subsection{Practical Implications}

Our results highlight five key takeaways:

\paragraph{Architectural Capacity.}
Our theoretical findings highlight that even a minimalist Transformer configuration can perform surprisingly complex logical reasoning. In particular, a single-head, single-layer softmax attention module (with a simple feed-forward output) is sufficient to represent and learn monotone Boolean functions involving $\Theta(d)$-way feature interactions.
This defies the conventional intuition that deep stacks of layers or large model depth are necessary for such combinatorial tasks. In principle, one layer of softmax attention already possesses the expressive capacity for high-arity logical operations, such as an AND/OR over a hidden subset of the inputs.

\paragraph{Training Dynamics and Supervision.}
From an optimization perspective, our results expose a stark dichotomy in learning outcomes. With carefully designed intermediate supervision (for example, a teacher-forcing signal that guides the attention head’s output), gradient descent homes in on the correct solution in a single step. In essence, the model quickly “finds a needle in a haystack” by immediately identifying the true relevant subset of features.
In contrast, under standard end-to-end training (i.e. using only input-output pairs with no intermediate hints), the same model is provably unable to escape the haystack of exponentially many possibilities.
No polynomial-time algorithm can find the correct subset in this setting without an exponential number of samples or steps. 

In practical terms, this suggests that appropriate inductive biases or curriculum-based training protocols (such as breaking the task into smaller, explicitly supervised steps) are essential for learning such logical structure.
Simply scaling up model size or training data, without the right form of intermediate guidance, is unlikely to yield the desired reasoning ability. Notably, this theoretical dichotomy mirrors recent empirical successes with chain-of-thought training methods: providing the model with intermediate “hints” or subgoals can transform an otherwise intractable learning problem into a trivial one-step task.
 Our results provide a concrete example of this principle, explaining why giving the model the right hint makes all the difference.

\paragraph{Why Supervision Helps?}
The analysis offers insight into \textit{how} the presence of intermediate targets so dramatically alters the learning dynamics. 
Under the idealized loss with teacher-forcing supervision, the initial gradient is \textit{exactly aligned} with the direction of the true $k$-bit subset of features.
In other words, right at initialization the very first gradient step nudges the attention weights toward precisely the correct $k$ relevant bits.
This fortunate alignment is what enables one-step learning: the model effectively locks onto the correct subset almost immediately.
By contrast, without any intermediate signals, the initial gradient is merely an average over all plausible target functions, and the informative component pointing to the true subset is drowned out by the contributions of myriad incorrect subsets.
The model is left with no clear direction in parameter space, meaning that exponentially many samples or updates would be required to eventually sift out the true features from final-output supervision alone.
These structural observations vividly illustrate how a well-chosen training signal can fundamentally alter the trajectory of learning, turning an otherwise infeasible search problem into a tractable one.

\paragraph{Broader Theoretical Significance.}
In a broader context, our results reinforce an emerging theme in the theory of Transformer learning: \textit{expressive power is cheap, but learning power is costly}.
Even an extremely simple attention architecture — a one-head, one-layer Transformer — can represent surprisingly intricate Boolean logic.
Prior work has likewise shown that even small Transformers can emulate complex computations by appropriate setting of their weights.
\textit{The true bottleneck, therefore, is not the ability to express or represent a complex function, but the ability to learn it efficiently. }
Without the aid of intermediate hints (such as teacher forcing or chain-of-thought supervision), gradient-based training must blindly explore an exponentially large hypothesis space, and it inevitably stalls when confined to polynomial time or sample complexity.
Thus, our theoretical study sharpens the distinction between what a minimal architecture could do in principle and what it can actually learn to do under standard training. The gap between expressivity and learnability uncovered here points to the critical role of the training regime in unlocking a model’s potential.

\paragraph{Implications for Curriculum Design.}
By identifying the exact form of supervision that flips our learning task from intractable to one-step solvable, we provide a clean benchmark for research on curriculum learning, intermediate targets, and inductive biases.
This $k$-bit Boolean teacher-forcing task serves as a minimal example of how the right training protocol can unlock a network’s latent capabilities. It illuminates how even very simple models can succeed at systematic reasoning when guided with minimal but well-chosen intermediate feedback. Such insights suggest a principled blueprint for designing curricula and architectural biases to teach Transformers how to reason, rather than relying on brute-force depth or scale alone.
Future work can use this task as a testbed for exploring how additional hints, auxiliary losses, or structural priors might bridge the gap between a model’s theoretical capacity and its practical learnability.

\paragraph{Summary.} In summary, our work not only demonstrates a concrete capability of a one-layer attention model under the right conditions, but also identifies the source of its failure under standard end-to-end training. 
This dual perspective — exhibiting a power of a minimalist architecture and isolating the cause of its limitation — offers a sharper understanding of how model design, supervision signals, and optimization dynamics jointly govern the learnability of complex functions. 
Together, these insights reveal when and how simple attention-based models can perform simplest logical reasoning, and when naive training inevitably falls short.

\section{Conclusion}
\label{sec:conclusion}

We show that a single-head softmax attention model can learn a $k$-bit AND/OR Boolean function in one gradient step with teacher forcing, achieving low error with only polynomial many samples (Theorem~\ref{thm:TF_with_intermediate_layer}). 
We also prove a lower bound: without such intermediate supervision, no efficient algorithm can learn these functions, and training remains stuck with error bounded away from zero (Theorem~\ref{thm:hardness_of_boolean}). 
These findings demonstrate the strong representational power of even the simplest attention networks. 
At the same time, they reveal that successful training hinges on the right supervision signals. 
Notably, our analysis aligns with recent results on parity \cite{ks25_iclr}, which likewise highlight the need for chain-of-thought guidance to solve certain tasks. 
Looking forward, these insights suggest that carefully designed curricula and training protocols incorporating intermediate hints could unlock the full potential of simple attention models. 
They also invite further theoretical exploration into how such minimalist architectures learn complex tasks.

\section*{Acknowledgements}

JH would like to thank Josh Zhang, Ziyu Dai, Mimi Gallagher, Sara Sanchez, Dino Feng and Andrew Chen for enlightening discussions on related topics, and the Red Maple Family for support. 

JH is partially supported by the Walter P. Murphy Fellowship.
HL is partially supported by NIH R01LM1372201, AbbVie and Dolby.
The content is solely the responsibility of the authors and does not necessarily represent the official
views of the funding agencies.

\ifdefined\isarxiv

\else
\bibliographystyle{alpha}
\bibliography{ref}
\fi

\newpage
\onecolumn
\appendix

\begin{center}
    \textbf{\LARGE Appendix }
\end{center}

\paragraph{Roadmap.}
In Section \ref{sec:impact}, we present the paper’s broader impact. 
Section \ref{sec:limitation} discusses its limitations. 
Section \ref{sec:prob} lists well-known probability tools such as Hoeffding and Chernoff bounds, and recalls a basic algebraic fact. 
Section \ref{sec:interaction} introduces several interaction tools, primarily used to prove the upper bound. 
Section \ref{sec:upper} states our upper-bound result, and Section \ref{sec:lower} gives the lower-bound result. 
Section \ref{sec:noisy_boolean} extends classical Boolean functions to their noisy variants. 
Finally, Section \ref{sec:majority} extends our upper bound to the local majority problem.

\section{Broader Impact}\label{sec:impact}

Our theory identifies when a small attention model can and cannot learn logical rules.  
The results can guide curricula that add simple hints and save compute.  
The work is purely theoretical, so direct harm is unlikely.  
Clearer supervision may cut silent failures in safety-critical AI.  
We release no models or data, so misuse risk stays low.

\section{Limitations}\label{sec:limitation}
Our analysis isolates a clear supervision gap for $k$-bit monotone AND/OR functions but rests on several simplifying assumptions. First, the positive result requires teacher-forcing signals that expose the hidden subset, a form of intermediate supervision seldom available in practice. Second, the negative result is worst-case: polynomial-time learners might still succeed on benign data distributions or with heuristic regularization. Third, we study only monotone Boolean tasks with $k=\Theta(d)$ and a single-head, depth-one attention layer; extending the proofs to non-monotone logic, different sparsity regimes, or realistic multi-head Transformers remains open. Lastly, the work is purely theoretical. Empirical confirmation and tighter finite-sample constants are left for future research.

\section{Probability Tools and Simple Algebra Facts}\label{sec:prob}

To prepare our proof, we first introduce some well-known probability tools.

\begin{lemma}[Chebyshev's Inequality, Theorem~2 of \cite{che1867}]
\label{lem:chebyshev_inequality}
Let $X$ be a random variable with finite expected value $\mu = \mathbb{E}[X]$ and finite non-zero variance $\sigma^2 = \mathrm{Var}[X] > 0$. Then, for any real number $k > 0$,
\begin{align*}
    \mathbb{P}(|X - \mu| \ge k\sigma) \le \frac{1}{k^2}.
\end{align*}
\end{lemma}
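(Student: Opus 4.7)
The plan is to derive the bound from Markov's inequality applied to the non-negative random variable $(X-\mu)^2$, which is the standard textbook route and keeps the argument to a couple of lines. First I would recall (or quickly re-establish) Markov's inequality: for any non-negative random variable $Y$ and any threshold $a>0$, the pointwise bound $a \cdot \one_{\{Y \ge a\}} \le Y$ holds, and taking expectations yields $\mathbb{P}(Y \ge a) \le \mathbb{E}[Y]/a$. This is the only probabilistic input required.

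Next I would set $Y := (X-\mu)^2$ and $a := k^2 \sigma^2$. The key algebraic observation is the event-equality
\begin{align*}
    \{|X-\mu| \ge k\sigma\} = \{(X-\mu)^2 \ge k^2 \sigma^2\},
\end{align*}
which holds because $t \mapsto t^2$ is strictly increasing on $[0,\infty)$ and both $k\sigma$ and $|X-\mu|$ are non-negative. Combining this with Markov's inequality and the definition $\mathbb{E}[(X-\mu)^2] = \sigma^2$ gives
\begin{align*}
    \mathbb{P}(|X-\mu| \ge k\sigma) = \mathbb{P}\bigl((X-\mu)^2 \ge k^2 \sigma^2\bigr) \le \frac{\mathbb{E}[(X-\mu)^2]}{k^2 \sigma^2} = \frac{1}{k^2},
\end{align*}
which is exactly the claimed inequality.

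There is essentially no hard step here; the proof is a one-line consequence of Markov once the squaring trick is noted. The only point worth being careful about is that the hypothesis $\sigma^2 > 0$ ensures the denominator $k^2 \sigma^2$ is non-zero, so the application of Markov is well-defined; the finiteness of $\mu$ and $\sigma^2$ guarantees that the random variable $(X-\mu)^2$ has a finite, meaningful expectation. No additional moment assumptions or measurability subtleties arise, so the whole argument fits comfortably in a short displayed computation.
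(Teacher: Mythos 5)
Your proof is correct: the standard reduction of Chebyshev to Markov applied to $(X-\mu)^2$, with the event-equality justified by monotonicity of squaring on $[0,\infty)$, is exactly the textbook argument and it is complete. The paper itself does not give a proof of this lemma — it is stated as a known classical fact with a citation to Chebyshev's 1867 paper — so there is nothing to compare against; your short derivation is the canonical one and would serve as a valid replacement for the citation.
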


\begin{lemma}[Hoeffding's Inequality, Theorem~2 of \cite{hoe1963}]\label{lem:hoeffding_inequality}
    If $X_1, X_2,\cdots,X_n$ are independent random variables and
    $a_i \leq X_i \leq b_i$ for all $i \in [n]$. Let $\overline{X}=\frac{1}{n} \sum_{i=1}^n X_i$. 
    Then for any $\delta > 0$,
    \begin{align*}
        \Pr[ \ov{X} - \E[ \ov{X} ]\geq t ] \le
        \exp ( -\frac{2n^2 t^2}{\sum_{i=1}^n(b_i-a_i)^2} ).
    \end{align*}
\end{lemma}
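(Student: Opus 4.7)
The plan is to follow the classical Chernoff–Hoeffding template. First I would introduce a free parameter $\lambda>0$ and apply Markov's inequality to the exponentiated centered sum, getting
\begin{align*}
\Pr[\ov{X}-\E[\ov{X}]\geq t]
\;=\;\Pr\Big[\textstyle\sum_{i=1}^n(X_i-\E[X_i])\geq nt\Big]
\;\leq\;e^{-\lambda nt}\,\E\!\big[e^{\lambda\sum_i(X_i-\E[X_i])}\big].
\end{align*}
By independence the expectation factorizes as $\prod_{i=1}^n \E[e^{\lambda(X_i-\E[X_i])}]$, which is the standard setup of the moment-generating-function (MGF) method.

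The core ingredient is then Hoeffding's lemma: for any random variable $Y$ with $\E[Y]=0$ and $a\leq Y\leq b$ almost surely, $\E[e^{\lambda Y}]\leq \exp(\lambda^2(b-a)^2/8)$. I would establish this by writing each realization of $Y$ as the convex combination $Y=\tfrac{b-Y}{b-a}\,a+\tfrac{Y-a}{b-a}\,b$, using convexity of $t\mapsto e^{\lambda t}$, taking expectations to eliminate $Y$ on the right-hand side, and then showing that the resulting deterministic function $\psi(\lambda):=\log\!\big(\tfrac{b}{b-a}e^{\lambda a}-\tfrac{a}{b-a}e^{\lambda b}\big)$ satisfies $\psi(0)=\psi'(0)=0$ and $\psi''(\lambda)\leq (b-a)^2/4$ everywhere (the latter reducing to Popoviciu's variance inequality for a tilted Bernoulli-type distribution on $\{a,b\}$). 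A Taylor expansion then gives $\psi(\lambda)\leq \lambda^2(b-a)^2/8$.

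Third, I would substitute Hoeffding's lemma coordinate-wise to obtain
\begin{align*}
\Pr[\ov{X}-\E[\ov{X}]\geq t]\;\leq\;\exp\!\Big(-\lambda n t+\tfrac{\lambda^2}{8}\textstyle\sum_{i=1}^n(b_i-a_i)^2\Big),
\end{align*}
and optimize over $\lambda>0$. Setting $\lambda^\star=4nt/\sum_i(b_i-a_i)^2$ and plugging back yields the stated bound $\exp\!\big(-2n^2t^2/\sum_i(b_i-a_i)^2\big)$.

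The main obstacle is Hoeffding's lemma; everything else is routine. The subtle point in that lemma is showing the uniform bound $\psi''(\lambda)\leq (b-a)^2/4$, since the naive expression for $\psi''$ is messy. The cleanest route is to recognize $\psi''(\lambda)$ as the variance of a random variable supported on $[a,b]$ (namely, $Y$ under the exponentially tilted measure $\d\tilde\Pr\propto e^{\lambda Y}\d\Pr$), and then invoke Popoviciu's inequality $\mathrm{Var}(Z)\leq (b-a)^2/4$ for any $Z\in[a,b]$. With this identification the bound is immediate and uniform in $\lambda$, which is exactly what the Taylor argument needs.
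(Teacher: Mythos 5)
Your proposal is correct and is precisely the classical Chernoff--MGF argument with Hoeffding's lemma at its core, which is the proof given in the cited reference \cite{hoe1963}; the paper itself states this lemma without reproducing a proof, so there is no alternative route to compare against. Your optimization step checks out: with $\lambda^\star = 4nt/\sum_i(b_i-a_i)^2$ the exponent becomes $-2n^2t^2/\sum_i(b_i-a_i)^2$ as claimed (you might also note, for completeness, that the variable $\delta$ appearing in the lemma statement is vestigial and the bound is stated in terms of $t$).
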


\begin{lemma}[Chernoff Bound]\label{lem:chernoff_bound}
Let $X \sim \mathrm{Bin}(n,p)$ and let $\mu=\E[X]$. For any $\delta \in (0,1)$, we have
\begin{itemize}
    \item $\Pr[ X \geq (1+\delta) \mu ] \leq \exp(-\delta^2 \mu / 3)$.
    \item $\Pr[ X \leq (1-\delta) \mu ] \leq \exp(-\delta^2 \mu /2)$.
\end{itemize}
\end{lemma}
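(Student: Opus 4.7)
}
The plan is to use the standard exponential moment method (Chernoff's method) applied to $X = \sum_{i=1}^n X_i$ where each $X_i \sim \mathrm{Bernoulli}(p)$ is independent. The starting observation is that for any $t > 0$, Markov's inequality applied to the random variable $e^{tX}$ yields $\Pr[X \geq a] \leq \mathbb{E}[e^{tX}] / e^{ta}$. Since the $X_i$ are i.i.d.\ Bernoulli, the moment generating function factorizes as $\mathbb{E}[e^{tX}] = (1 - p + pe^t)^n$, and the elementary inequality $1 + x \leq e^x$ gives the clean bound $\mathbb{E}[e^{tX}] \leq \exp(\mu(e^t - 1))$, where $\mu = np$. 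This is the single technical device that drives both tails.

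For the upper tail, I would set $a = (1+\delta)\mu$ and optimize over $t > 0$; the standard choice $t = \ln(1+\delta)$ leads to the exponent
\begin{align*}
\Pr[X \geq (1+\delta)\mu] \leq \left( \frac{e^\delta}{(1+\delta)^{1+\delta}} \right)^\mu.
\end{align*}
From here the task reduces to the purely analytic inequality $\delta - (1+\delta)\ln(1+\delta) \leq -\delta^2/3$ for $\delta \in (0,1)$, which follows from a Taylor expansion of $\ln(1+\delta)$ around $0$ together with a convexity check. For the lower tail, I would mirror the argument with $t < 0$ applied to $\Pr[-X \geq -(1-\delta)\mu]$, obtaining the analogous bound $\left( e^{-\delta} / (1-\delta)^{1-\delta} \right)^\mu$, and then verify the slightly sharper analytic inequality $-\delta - (1-\delta)\ln(1-\delta) \leq -\delta^2/2$ on $(0,1)$.

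The main obstacle, such as it is, lies entirely in the two scalar inequalities at the end, since the probabilistic machinery (Markov applied to the MGF, followed by its product structure) is automatic. Both inequalities are handled by defining $f(\delta) := \delta^2/3 + \delta - (1+\delta)\ln(1+\delta)$ (respectively with $\delta^2/2$ for the lower tail), showing $f(0) = f'(0) = 0$, and verifying that $f'' \leq 0$ on $(0,1)$; this guarantees $f \leq 0$ throughout the interval. No additional ingredients beyond Markov's inequality, independence, and these two one-variable calculus checks are needed, so the entire proof fits comfortably within the probability-tools appendix.
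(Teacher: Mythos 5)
The paper states Lemma~\ref{lem:chernoff_bound} as a standard tool without proof, so there is no in-paper argument to compare against; the evaluation here is purely on correctness. Your overall strategy is the right one: the exponential-moment (Chernoff) method, the product form and the bound $\E[e^{tX}]\le\exp(\mu(e^t-1))$, the choice $t=\ln(1\pm\delta)$, and the reduction to two scalar inequalities in $\delta$ are all standard and sound.

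However, there is a genuine gap in your final calculus step for the \emph{upper} tail. With $f(\delta):=\delta^2/3+\delta-(1+\delta)\ln(1+\delta)$, a direct computation gives
\begin{align*}
f'(\delta)=\tfrac{2\delta}{3}-\ln(1+\delta),\qquad f''(\delta)=\tfrac{2}{3}-\tfrac{1}{1+\delta},
\end{align*}
and $f''(\delta)>0$ as soon as $\delta>1/2$ (e.g.\ $f''(0.6)\approx 0.042$). So $f$ is not concave on $(0,1)$, and the argument ``$f(0)=f'(0)=0$ and $f''\le 0$ on $(0,1)$ $\Rightarrow$ $f\le 0$'' does not apply. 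The inequality $f\le 0$ on $(0,1)$ is nevertheless true; one correct route is to observe that $f'(0)=0$, $f'$ decreases on $(0,1/2)$ and increases on $(1/2,1)$, and $f'(1)=\tfrac{2}{3}-\ln 2<0$, so $f'\le 0$ on all of $[0,1]$ and hence $f\le f(0)=0$. Another common route is to first prove the tighter intermediate bound with exponent $-\delta^2\mu/(2+\delta)$ and then use $2+\delta\le 3$ for $\delta\in(0,1)$. Your concavity argument \emph{does} work for the lower tail, since with $g(\delta):=\delta^2/2-\delta-(1-\delta)\ln(1-\delta)$ one gets $g''(\delta)=-\delta/(1-\delta)\le 0$ on $[0,1)$. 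So only the upper-tail justification needs to be repaired; the rest of the proposal is fine.
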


\begin{fact}\label{fac:a_b_delta}
If the following conditions hold
\begin{itemize}
    \item $a >0 , b>0$.
    \item Let $\delta \in (0,0.1)$.
    \item $a / b \leq 1 + \delta$.
    \item $b / a \leq 1 + \delta$.
    \item $a + b \geq 1 - \delta$.
    \item $a + b \leq 1$.
\end{itemize}
Then, we can show
\begin{itemize}
    \item {\bf Part 1.} $a \in [\frac{1}{2}-2\delta,\frac{1}{2}+2\delta]$.
    \item {\bf Part 2.} $b \in [\frac{1}{2}-2\delta,\frac{1}{2}+2\delta]$.
\end{itemize}
\end{fact}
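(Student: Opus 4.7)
}
The plan is to combine the ratio inequalities with the sum inequalities by elementary substitution to pin $a$ and $b$ inside a window of width $4\delta$ around $1/2$. The structure is symmetric in $a$ and $b$, so I will prove Part~1 in detail and then remark that Part~2 follows by swapping the roles of $a$ and $b$.

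First, I would rewrite the two ratio constraints in additive form. The bound $a/b \leq 1+\delta$ is equivalent to $b \geq a/(1+\delta)$, and $b/a \leq 1+\delta$ is equivalent to $b \leq (1+\delta)a$ (both use $a,b>0$). Plugging the upper estimate $b \leq (1+\delta)a$ into $a+b \geq 1-\delta$ gives
\begin{align*}
(2+\delta)\, a \;\geq\; a+b \;\geq\; 1-\delta, \qquad\text{hence}\qquad a \;\geq\; \tfrac{1-\delta}{2+\delta}.
\end{align*}
Plugging the lower estimate $b \geq a/(1+\delta)$ into $a+b \leq 1$ gives
\begin{align*}
\tfrac{2+\delta}{1+\delta}\, a \;\leq\; a+b \;\leq\; 1, \qquad\text{hence}\qquad a \;\leq\; \tfrac{1+\delta}{2+\delta}.
\end{align*}

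Next, I would convert these two rational bounds into the symmetric window $[1/2-2\delta,\,1/2+2\delta]$ by cross-multiplication. For the upper bound, the inequality $(1+\delta)/(2+\delta) \leq 1/2 + 2\delta$ reduces, after clearing the positive denominator $2(2+\delta)$, to $2(1+\delta) \leq (1+4\delta)(2+\delta)$, i.e.\ $0 \leq 7\delta + 4\delta^{2}$, which is true for $\delta>0$. For the lower bound, $(1-\delta)/(2+\delta) \geq 1/2 - 2\delta$ similarly reduces to $5\delta + 4\delta^{2} \geq 0$, again true. (The hypothesis $\delta \in (0,0.1)$ is more than enough; the slack $2\delta$ versus the optimal $\approx 3\delta/4$ leaves plenty of room.) This establishes Part~1.

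For Part~2, I would simply observe that all six hypotheses are symmetric in $a$ and $b$ (the ratio constraints are a pair, and the sum constraints are symmetric), so the identical calculation with $a$ and $b$ interchanged yields $b \in [1/2-2\delta,\,1/2+2\delta]$. There is no real obstacle here: the only thing to double-check is that the cross-multiplications preserve inequality direction, which they do because $2+\delta > 0$ and $1+\delta > 0$ throughout. The main ``work'' is just bookkeeping on the two elementary polynomial inequalities $7\delta + 4\delta^{2} \geq 0$ and $5\delta + 4\delta^{2} \geq 0$, both of which are manifest.
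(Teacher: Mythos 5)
Your proof is correct, and it takes a slightly different route from the paper's. The paper argues by a symmetry/WLOG step: since $a+b\geq 1-\delta$, one of the two numbers is at least $\tfrac12-\tfrac{\delta}{2}$; assuming it is $a$, the bound $a+b\leq 1$ forces $b\leq \tfrac12+\tfrac{\delta}{2}$, and then the ratio bound $a\leq (1+\delta)b\leq(1+\delta)(\tfrac12+\tfrac{\delta}{2})\leq \tfrac12+2\delta$ closes the argument, with the other cases handled symmetrically. You instead avoid any case split: you substitute the ratio bounds directly into the two sum constraints to get the explicit two-sided estimate $\tfrac{1-\delta}{2+\delta}\leq a\leq \tfrac{1+\delta}{2+\delta}$, and then verify by cross-multiplication that this interval sits inside $[\tfrac12-2\delta,\tfrac12+2\delta]$ (your reductions to $7\delta+4\delta^2\geq 0$ and $5\delta+4\delta^2\geq 0$ both check out). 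Your version is arguably cleaner — it is fully symmetric by construction, needs no WLOG, and incidentally produces the sharper intermediate bounds $\tfrac{1\pm\delta}{2+\delta}$ — while the paper's version is shorter once the WLOG is accepted. Either argument is a complete and valid proof of the Fact.
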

\begin{proof}

Without loss of generality, we know one of the $a$ and $b$ is $\geq \frac{1}{2} - \delta/2$. Thus, we can assume that $a \geq \frac{1}{2} - \delta/2$ and $b \leq \frac{1}{2} + \delta /2$.

\paragraph{Proof of Part 1.}

Then we can show
\begin{align*}
a \geq & ~ \frac{1}{2} - \delta/2 \\
\geq & ~ \frac{1}{2} -\delta,
\end{align*}
where the first step follows from our assumption of $a$, the second step follows from the domain of $\delta$.

We can show
\begin{align*}
    a \leq & ~ (1+\delta) b \\
    \leq & ~ (1+\delta) (\frac{1}{2} + \delta /2) \\
    = & ~ \frac{1}{2} + 1.5 \delta + \delta^2/ 2 \\
    \leq & ~ \frac{1}{2} + 2 \delta,
\end{align*}
where the first step follows from $a/b \leq 1+\delta$, the second step follows from our assumption of $b$, the second step follows from the simple algebra, the third step follows from the domain of $\delta$.

\paragraph{Proof of Part 2.}

We know that
\begin{align*}
b \leq & ~ \frac{1}{2} + \delta /2 \\
\leq & ~ \frac{1}{2} + \delta,
\end{align*}
where the first step follows from our assumption of $b$, the second step follows from the domain of $\delta$.

Similarly, we can show
\begin{align*}
    b \geq \frac{1}{2} - 2\delta,
\end{align*}
where the step follows from a similar procedure as {\bf Part 1}.

\end{proof}

\section{Interactions}\label{sec:interaction}

\subsection{Interaction Tool from Previous Work}
We start with stating a tool from previous work,
\begin{lemma}[Concentration of Interaction Terms, Lemma 9 of \cite{ks25_iclr}]\label{lem:concentration_of_interaction_terms}
If the following conditions holds
\begin{itemize}
    \item Let $\kappa$ be defined $\kappa := 4 \sqrt{ \log ( d/p) / n}$.
    \item Let $p \in (0,0.1)$ denote the failure probability.
    \item Suppose each bit $x^i_j$ for $i \in [n], j \in [d]$ is i.i.d. generated from the uniform distribution on $\{\pm 1\}$.
    \item Let $I_{r,m}:=\{ (j_1, \cdots,j_r) ~|~ 1 \leq j_1, \cdots, j_r \leq m-1, x_{j_1}\cdots x_{j_r} \not\equiv 1 \}$
\end{itemize}

Then, we have with probability at least $1-p$ 
\begin{align*}
\max_{\substack{ r \in [4] \\(j_1,\cdots, j_r) \in I_{r,m}}} \frac{1}{n}| \langle x_{j_1}, \dots, x_{j_r} \rangle | \le \kappa.
\end{align*} 
\end{lemma}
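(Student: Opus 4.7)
The plan is to combine a per-tuple Hoeffding bound with a union bound over the at most $O(d^4)$ admissible index tuples. The key structural observation is that the definition of $I_{r,m}$ is tailor-made to force each product to be a mean-zero Rademacher variable, after which the tail estimate is routine and the constant $4$ in $\kappa$ is exactly what is needed to absorb the union-bound factor.

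First, I would verify the centering claim. Fix $(j_1,\ldots,j_r)\in I_{r,m}$ and let $P^i := x_{j_1}^i\cdots x_{j_r}^i\in\{\pm 1\}$. Collecting the multiset of indices, independence gives $\E[P^i]=\prod_j \E[(x_j^i)^{m_j}]$, where $m_j$ is the multiplicity of index $j$; for $\pm 1$ Rademacher bits this is $1$ whenever $m_j$ is even and $0$ whenever $m_j$ is odd. Hence $\E[P^i]\ne 0$ only if every multiplicity is even, but in that case the product is identically $1$, which is precisely what the condition $x_{j_1}\cdots x_{j_r}\not\equiv 1$ in the definition of $I_{r,m}$ rules out. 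So on $I_{r,m}$ the random variables $P^1,\ldots,P^n$ are i.i.d., bounded in $[-1,1]$, and mean zero.

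Second, I would invoke Hoeffding's inequality (Lemma~\ref{lem:hoeffding_inequality}) with $a_i=-1,\,b_i=1$ to get, for every $t>0$,
\begin{align*}
    \Pr\!\left[\,\left|\tfrac{1}{n}\langle x_{j_1},\ldots,x_{j_r}\rangle\right|\ge t\,\right]\le 2\exp(-nt^2/2),
\end{align*}
and then take a union bound over $r\in[4]$ and $(j_1,\ldots,j_r)\in I_{r,m}$. Since $|I_{r,m}|\le (m-1)^r\le d^r$, the total count of tuples is at most $4d^4$. Plugging in $t=\kappa=4\sqrt{\log(d/p)/n}$, so that $nt^2/2 = 8\log(d/p)$, bounds the failure probability by $8d^4\exp(-8\log(d/p)) = 8p^8/d^4\le p$, which holds because $p\in(0,0.1)$ and $d\ge 2$.

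The only mildly delicate step is the mean-zero observation in Step~1: one must be careful that the notation $x_{j_1}\cdots x_{j_r}\not\equiv 1$ is interpreted as non-identically-$1$ as a function of the realization, equivalently that some index appears with odd multiplicity. Once that equivalence is pinned down, the rest is a textbook Hoeffding-plus-union-bound calculation, and the factor $4$ in $\kappa$ is precisely what makes $nt^2/2 = 8\log(d/p)$ dominate $\log(8d^4)+\log(1/p)$ for the relevant regime of $d$ and $p$.
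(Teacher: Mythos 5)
Your argument is correct and uses the standard Hoeffding-plus-union-bound approach. Note that the paper does not actually supply a proof of this lemma---it is imported verbatim as Lemma~9 of \cite{ks25_iclr}---but the proof it does give for the $\{0,1\}$ analogue (Lemma~\ref{lem:concentration_of_interaction_terms_and}) is precisely your calculation adapted to Bernoulli bits, so your route and the paper's are the same. Your Step~1, unpacking $x_{j_1}\cdots x_{j_r}\not\equiv 1$ into ``some index has odd multiplicity, hence $\E[P^i]=0$,'' is in fact a cleaner justification of the centering step than what the paper writes, and the arithmetic in Step~2 (failure probability $8p^8/d^4 \le p$ for $p\in(0,0.1)$, $d\ge 2$) checks out.
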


\subsection{Our Interaction Tool}

\begin{lemma}[Concentration of Interaction Terms]\label{lem:concentration_of_interaction_terms_and}
If the following conditions holds
\begin{itemize}
    \item Let $\kappa$ be defined $\kappa := 4 \sqrt{  \log ( d/p) / n}$.
    \item Let $p \in (0,0.1)$ denote the failure probability.
    \item Suppose each bit $x^i_j$ for $i \in [n], j \in [d]$ is i.i.d. generated from the uniform distribution on $\{0,1\}$.
\end{itemize}

Then, we have with probability at least $1-p$ 
\begin{align*}
\max_{\substack{ r \in [2] \\(j_1,\cdots, j_r) \in I_r}} \frac{1}{n}| \langle x_{j_1}, \dots, x_{j_r} \rangle - \frac{1}{2^r}| \le \kappa.
\end{align*}
\end{lemma}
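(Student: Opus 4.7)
The plan is to apply Hoeffding's inequality pointwise to each index tuple and then take a union bound over all tuples of length $r\in[2]$. Observe that for any distinct indices $j_1,\dots,j_r\in[d]$, the row-wise terms $\{x^i_{j_1}\cdots x^i_{j_r}\}_{i=1}^n$ are i.i.d.\ Bernoulli random variables taking values in $[0,1]$, since each is an indicator of the event that the chosen $r$ bits of the $i$-th row are simultaneously $1$. By independence of the bits across coordinates, their common mean is exactly $2^{-r}$, so the empirical average $\frac{1}{n}\langle x_{j_1},\dots,x_{j_r}\rangle$ is an unbiased estimator of $2^{-r}$.

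First I would fix an arbitrary tuple $(j_1,\dots,j_r)\in I_r$ with $r\in\{1,2\}$ and apply Lemma~\ref{lem:hoeffding_inequality} with range $[0,1]$ to obtain
\begin{align*}
\Pr\!\left[\,\left|\tfrac{1}{n}\langle x_{j_1},\dots,x_{j_r}\rangle - 2^{-r}\right|\ge\kappa\,\right]
\;\le\; 2\exp(-2n\kappa^2).
\end{align*}
Substituting the choice $\kappa = 4\sqrt{\log(d/p)/n}$ makes the exponent equal to $32\log(d/p)$, so the per-tuple failure probability is at most $2(p/d)^{32}$.

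Next I would union-bound over the at most $d+d^2\le 2d^2$ admissible tuples in $I_1\cup I_2$:
\begin{align*}
\Pr\!\left[\,\max_{r\in[2],\,(j_1,\dots,j_r)\in I_r}\tfrac{1}{n}\bigl|\langle x_{j_1},\dots,x_{j_r}\rangle-2^{-r}\bigr|>\kappa\,\right]
\;\le\; 2\cdot 2d^2\cdot (p/d)^{32}
\;\le\; p,
\end{align*}
where the last inequality holds comfortably for any $d\ge 1$ and $p\in(0,0.1)$. Taking complements yields the claimed uniform concentration.

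The main obstacle is essentially bookkeeping rather than a conceptual difficulty. The one step that deserves care is verifying the correct mean for length-$r$ monomials under the $\{0,1\}$-Bernoulli law (as opposed to the $\{\pm 1\}$ case of Lemma~\ref{lem:concentration_of_interaction_terms}, where the mean is zero). In the present setting the mean shifts to $2^{-r}$, which is precisely why the recentered empirical sum $\frac{1}{n}\langle x_{j_1},\dots,x_{j_r}\rangle-2^{-r}$ appears inside the absolute value. Everything else is a straightforward instantiation of the Hoeffding-plus-union-bound template, and the constant $4$ in the definition of $\kappa$ is chosen precisely to absorb the union-bound factor while keeping the failure probability below $p$.
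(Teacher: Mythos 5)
Your proof is correct and follows essentially the same route as the paper's: apply Hoeffding's inequality to the $\{0,1\}$-valued monomial $x^i_{j_1}\cdots x^i_{j_r}$ (whose mean under independence is $2^{-r}$) and union-bound over tuples, with the constant $4$ in $\kappa$ absorbing the polynomial blow-up. If anything, your bookkeeping is tighter than the paper's: you restrict the union bound to $|I_1|+|I_2|\le 2d^2$ tuples, which is what the stated range $r\in[2]$ actually requires, whereas the paper's proof loosely counts up to $3d^3$ tuples (including $r=3$) and contains some typographical slips in the Hoeffding exponent and the union-bound prefactor that your version avoids.
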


\begin{proof}

    Each tuple $(j_1, \cdots, j_r,) \in I_{r}$ computes a boolean $x_{j_r} \cdots x_{j_r}$ for which the bits $x^i := x^i_{j_r} \cdots x^i_{j_r}, i = 1, \cdots, n$ are i.i.d. $\Pr[x^i = 1] = \frac{1}{2^r}$ and $\Pr[x^i = 0] = 1 - \frac{1}{2^r} $. By Lemma~\ref{lem:hoeffding_inequality} we have that
    \begin{align*}
        \Pr[ | \langle x_{j_1}, \cdots, x_{j_r} \rangle - \frac{n}{2^r}| \ge \kappa ] \le 2e^{- \kappa^{2 / n}},
    \end{align*}
    Moreover, $|I_{r}| \le d^r $ so that
    \begin{align*}
        |I_1| + |I_2| + |I_3| \le d + d^2 + d^3 < 3d^3,
    \end{align*}
    Therefore it follows by union bounding that
    \begin{align*}
        \Pr[ \max_{\substack{ r \in [2], (j_1, \cdots, j_r) \in I_r}} | \langle x_{j_1}, \cdots, x_{j_r} \rangle | \ge n\kappa ] 
        \leq & ~ 6d^24 e^{- (n\kappa)^2 / n} \\
        = & ~ 6d^3 e^{- 4\log (d/p)} \\
        = & ~ 6d^3 (p^4/d^4) \\
        \le & ~ p,
    \end{align*}
    where the second step follows choosing $\kappa = 4\sqrt{  \log(d/p)  / n }$, and the last step follows from $p \in (0,0.1)$.
    
    Thus, we complete the proof.
\end{proof}

\begin{lemma}[Concentration of majority interaction terms]\label{lem:concentration_of_majority_interaction_terms}
If the following conditions holds
\begin{itemize}
    \item Let $\kappa$ be defined $\kappa := 4\sqrt{ \log(d/p) / n}$.
    \item Let $p \in (0,0.1)$ denote the failure probability.
    \item Suppose each bit $x^i_j$ for $i \in [n], j \in [d]$ is i.i.d. generated from the uniform distribution on $\{\pm1\}$.
    \item Let $\mathsf{MAJ2}: \{+2,0,-2\}^d 
   \rightarrow \{+1,0,-1\}^d$ be defined as $\mathsf{MAJ2}(x+y) := (x+y)/2$ for all $x,y \in \{+1,-1\}^d$.
\end{itemize}

Then, we have with probability at least $1-p$ 
\begin{align*}
\max_{m \in [t]} | \frac{1}{n} \langle x_{j_{2m-1}}, \mathsf{MAJ2}(x_{j_{2m-1}}, x_{j_{2m}}) \rangle - \frac{1}{2} | \le \kappa.
\end{align*}
\end{lemma}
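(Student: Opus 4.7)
The plan is to reduce the quantity inside the max to a simple empirical mean of Rademacher products, and then apply Hoeffding's inequality (Lemma~\ref{lem:hoeffding_inequality}) together with a union bound over $m \in [t]$. First, I would unfold $\mathsf{MAJ2}$: by its definition, the $i$-th coordinate of $\mathsf{MAJ2}(x_{j_{2m-1}}, x_{j_{2m}})$ is $(x^i_{j_{2m-1}} + x^i_{j_{2m}})/2$. Hence
\begin{align*}
    \langle x_{j_{2m-1}}, \mathsf{MAJ2}(x_{j_{2m-1}}, x_{j_{2m}}) \rangle
    = \tfrac{1}{2}\sum_{i=1}^n (x^i_{j_{2m-1}})^2 + \tfrac{1}{2}\sum_{i=1}^n x^i_{j_{2m-1}} x^i_{j_{2m}}.
\end{align*}
Since $x^i_{j_{2m-1}} \in \{\pm 1\}$, the first sum equals $n$, which after dividing by $n$ cancels the constant $\tfrac{1}{2}$ exactly. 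So the quantity we need to control reduces to
\begin{align*}
    \frac{1}{n} \langle x_{j_{2m-1}}, \mathsf{MAJ2}(x_{j_{2m-1}}, x_{j_{2m}}) \rangle - \frac{1}{2}
    = \frac{1}{2n} \sum_{i=1}^n x^i_{j_{2m-1}} x^i_{j_{2m}}.
\end{align*}

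Next, I would observe that for any fixed $m$ with $j_{2m-1} \ne j_{2m}$, the products $Z_i := x^i_{j_{2m-1}} x^i_{j_{2m}}$ are i.i.d.\ uniform on $\{\pm 1\}$ (since the bits are independent Rademacher), each supported in $[-1, 1]$. Applying Hoeffding's inequality yields
\begin{align*}
    \Pr\Bigl[\, \bigl|\tfrac{1}{2n}\sum_{i=1}^n Z_i\bigr| \ge \kappa \,\Bigr]
    \le 2 \exp\bigl(-2 n \kappa^2\bigr)
    = 2 \exp\bigl(-32 \log(d/p)\bigr)
    = 2 (p/d)^{32},
\end{align*}
where I used $\kappa = 4\sqrt{\log(d/p)/n}$. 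Union-bounding over the $m \in [t]$ coordinates (with $t \le d$) gives a total failure probability of at most $2 d \cdot (p/d)^{32} = 2 p^{32}/d^{31}$, which is far smaller than $p$ for $p \in (0, 0.1)$ and $d \ge 1$. The boundary case $j_{2m-1} = j_{2m}$ (if permitted by the problem) is even easier: $Z_i \equiv 1$, and the $-1/2$ term is exactly cancelled by $1/2$, so the bound holds deterministically with slack.

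The argument is almost entirely routine; the only conceptual step is noticing that $\mathsf{MAJ2}$ has an extremely friendly algebraic form on $\{\pm 1\}$ inputs, namely that the diagonal term $(x^i_{j_{2m-1}})^2$ absorbs the $1/2$ offset exactly. Once that is exploited, the centering is automatic and no second-moment or higher-order interaction calculation (as in Lemma~\ref{lem:concentration_of_interaction_terms_and}) is required. I do not anticipate any significant obstacle; the main thing to double-check is the constant in Hoeffding to ensure the $\kappa$ choice matches the one used elsewhere in the paper so that the bound here plugs directly into the downstream majority analysis in Section~\ref{sec:majority}.
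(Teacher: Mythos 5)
Your proposal is correct and takes essentially the same approach as the paper: both exploit the identity $\mathsf{MAJ2}(x_{j_{2m-1}},x_{j_{2m}})=(x_{j_{2m-1}}+x_{j_{2m}})/2$ so that the diagonal term cancels the $1/2$ offset and the problem reduces to concentration of $\frac{1}{2n}\langle x_{j_{2m-1}},x_{j_{2m}}\rangle$. The only cosmetic difference is that the paper then invokes the packaged concentration result (Lemma~\ref{lem:concentration_of_interaction_terms}) to bound $\frac{1}{n}|\langle x_{j_{2m-1}},x_{j_{2m}}\rangle|\le\kappa$, whereas you apply Hoeffding plus a union bound directly, which is the same argument unwound.
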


\begin{proof}
   Recall the definition of $\mathsf{MAJ2}$.  Notice that
    \begin{align}\label{eqn:maj_x1_x2}        
        \mathsf{MAJ2}(x_{j_1}, x_{j_2}) = \frac{x_{j_1} + x_{j_2}}{2},
    \end{align}
    
    We can show that
    \begin{align*}        
        \langle x_{j_1}, \mathsf{MAJ2}(x_{j_1}, x_{j_2}) \rangle 
        = & ~ \langle x_{j_1}, \frac{x_{j_1} + x_{j_2}}{2 } \rangle \\
        = & ~ \frac{1}{2} \langle x_{j_1}, x_{j_1} \rangle + \frac{1}{2} \langle x_{j_1}, x_{j_2} \rangle \\
        = & ~ \frac{n}{2} + \frac{1}{2}\langle x_{j_1}, x_{j_2} \rangle,
    \end{align*}
    where the first step follows from Eq.~\eqref{eqn:maj_x1_x2}, the second step follows linearity of inner product, and the last step follows from $x_{j_1} \in \{-1,+1\}^n$.

    Note that above equation implies
    \begin{align*}
        \frac{1}{n} \langle x_{j_1}, x_{j_2} \rangle = \frac{2}{n} \langle x_{j_1}, \mathsf{MAJ2}(x_{j_1}, x_{j_2}) \rangle  - 1
    \end{align*}
    
    Applying Lemma~\ref{lem:concentration_of_interaction_terms}  we have 
    \begin{align*}
         \Pr[  \max_{m \in [t]} | \frac{1}{n} \langle x_{j_{2m-1}}, x_{j_{2m}} \rangle | \le \kappa] \geq 1-p.
    \end{align*} 

    Combining the above two equations, we have
    \begin{align*}
         \Pr[  \max_{m \in [t]} | \frac{1}{n} \langle x_{j_1}, \mathsf{MAJ2}(x_{j_1}, x_{j_2}) \rangle  - \frac{1}{2} | \le \kappa/2 ] \geq 1-p.
    \end{align*}

    This completes the proof.
\end{proof}

\section{Upper Bound}
\label{sec:upper}

The goal of this section is to prove Theorem~\ref{thm:TF_with_intermediate_layer}. Let us restate it first.
\begin{theorem}[Upper Bound: Softmax Attention Provably Solve Definition~\ref{def:problem_def_boolean} with Teacher Forcing, Theorem~\ref{thm:TF_with_intermediate_layer} Restated]

Let $\epsilon > 0$, and suppose $d$ is a sufficiently large positive integer. Let $k=\Theta(d)$ be an even integer, and set $t = k/2$. 
Define $\mathcal{B} := \binom{[d]}{k}$ to be the collection of all size-$k$ subsets of $[d]$. Let $X := (x_1~\cdots~x_d) \in \R^{n \times d}$ and $E := (x_{d+1}~\cdots~x_{d+t}) \in \R^{n \times t}$. 
Assume $n = \Omega(d^\epsilon)$ and consider any $O(d^{-1 - \epsilon/4})$-approximate gradient oracle $\tilde{\nabla}$. Let the weights be initialized as $W^{(0)} = \mathbf{0}_{d \times t}$. 
Let $v_b \in \{0,1\}^d$ denote the indicator vector that encodes the Boolean target associated with subset $b \subseteq [d]$. 
Since ground-truth vector $v_b \in \{0,1\}^d$ is unknown, we define the surrogate function
\begin{align*}
L(W) := \frac{1}{2n}\|{\rm Att}_W(X) - E\|_F^2,
\end{align*}
instead of the loss $\| 2 \cdot \mathrm{Softmax}(W^{(1)}) \mathbf{1}_t - v_b \|_\infty$ to find the target weight matrix $W$.
Set the learning rate $\eta = \Theta(d^{1 + \epsilon/8})$, and choose $\kappa \in [d^{-1}, 1]$ (we set $\kappa = O(d^{-\epsilon/4})$). 
Let $W^{(1)} := W^{(0)} - \eta \cdot \nabla L(W^{(0)})$ be the one-step gradient update.

Then for any target subset $b \in \mathcal{B}$, the algorithm solves the $k$-Boolean problem (Definition~\ref{def:problem_def_boolean}) over $d$-bit inputs. With probability at least $1 - \exp(-\Theta(d^{\epsilon/2}))$ over the randomness in sampling, the one-step update $W^{(1)} \in \mathbb{R}^{d \times t}$ satisfies:
\begin{align*}
\| 2 \cdot \mathrm{Softmax}(W^{(1)}) \mathbf{1}_t - v_b \|_\infty \leq O(d^{-\epsilon/8}).
\end{align*}
\end{theorem}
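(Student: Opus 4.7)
The plan is to follow the three-step skeleton indicated by the authors' sketch, but with explicit attention to how the teacher-forcing decomposition makes the gradient at the uniform-softmax initialization a \emph{structured} vector that separates into a large ``signal'' on the $k$ coordinates belonging to $b$ and a small ``noise'' elsewhere. First I would differentiate the surrogate loss $L(W)=\frac{1}{2n}\|X\,\softmax(W)-E\|_F^2$ column-by-column: for each target column $m\in[t]$, the column $m$ of the attention output is $X\sigma(w_m)$ and is to be matched against $x_{d+m}=x_{j_{2m-1}}\odot x_{j_{2m}}$. At $W^{(0)}={\bf 0}_{d\times t}$, $\sigma(w_m)=\tfrac{1}{d}{\bf 1}_d$ so the prediction is the constant vector $\hat z=\tfrac{1}{d}\sum_{i=1}^d x_i$. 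Using the softmax Jacobian $\partial\sigma_j/\partial w_{j',m}=\sigma_j(\delta_{jj'}-\sigma_{j'})$, the partial derivative reduces algebraically to $\partial L/\partial w_{j,m}=\tfrac{1}{nd}\langle \hat z-x_{d+m},\,x_j-\hat z\rangle$, matching the form stated in the sketch.

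Next I would invoke the interaction-concentration toolbox of Section~\ref{sec:interaction}, specifically Lemma~\ref{lem:concentration_of_interaction_terms_and}, to show that every inner product $\tfrac{1}{n}\langle x_{j_1},x_{j_2}\rangle$ and $\tfrac{1}{n}\langle x_{j_1},x_{j_2}x_{j_3}\rangle$ concentrates around its $\{0,1\}$-uniform mean up to the $\kappa$-scale. Taking $n=\Omega(d^\epsilon)$ and $\kappa=O(d^{-\epsilon/4})$, with failure probability $\exp(-\Theta(d^{\epsilon/2}))$ after a union bound over the $\operatorname{poly}(d)$ relevant tuples, I would case-split the gradient: when $\mathsf{p}[j]=m$ (i.e.\ $j\in\{j_{2m-1},j_{2m}\}$) the correlation $\tfrac{1}{n}\langle x_{d+m},x_j\rangle-\tfrac{1}{n}\langle x_{d+m},\hat z\rangle$ produces a constant-order signal that yields $|\partial L/\partial w_{j,m}|=\Theta(d^{-1})$, while for any other $j$ the signal cancels and only $\kappa$-sized cross-terms survive, giving $O(d^{-1}\kappa)=O(d^{-1-\epsilon/4})$. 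The approximate-oracle slack $O(d^{-1-\epsilon/4})$ is absorbed in the non-signal bound by design.

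The second step is then straightforward bookkeeping. With $\eta=\Theta(d^{1+\epsilon/8})$, I would conclude
\begin{align*}
w_{j,m}^{(1)}=\Theta(d^{\epsilon/8})\cdot\one_{\{\mathsf{p}[j]=m\}}+O(d^{-\epsilon/8}).
\end{align*}
For each column $m$, exactly two coordinates $j=j_{2m-1},j_{2m}$ carry the large logit while the other $d-2$ carry a vanishing logit. A direct softmax calculation, factoring out $e^{\Theta(d^{\epsilon/8})}$ and using $d\cdot e^{-\Theta(d^{\epsilon/8})}=e^{-\Theta(d^{\epsilon/8})}$, gives $\sigma_j(w_m^{(1)})=\tfrac{1}{2}+O(d^{-\epsilon/8})$ for the two heavy coordinates and $\sigma_j(w_m^{(1)})=\exp(-\Theta(d^{\epsilon/8}))$ for the rest. (I would be careful to use the symmetric $\tfrac{1}{2}+O(d^{-\epsilon/8})$ bound, justified by Fact~\ref{fac:a_b_delta}, since the two heavy logits are equal up to the $O(d^{-\epsilon/8})$ noise so their softmax shares are within the required tolerance.)

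The final step sums over the $t$ columns. Because the map $\mathsf{p}:b\to[t]$ assigns each $j\in b$ to exactly one column $m$, the aggregate $(\softmax(W^{(1)}){\bf 1}_t)_j$ equals $\tfrac{1}{2}+O(d^{-\epsilon/8})+(t-1)\exp(-\Theta(d^{\epsilon/8}))$ for $j\in b$ and $t\exp(-\Theta(d^{\epsilon/8}))$ for $j\notin b$; multiplying by $2$ and comparing coordinate-wise to $v_b$ yields the claimed $O(d^{-\epsilon/8})$ $\ell_\infty$ bound. The main obstacle I anticipate is \emph{not} the final softmax calculation but rather Step~1: obtaining the clean $\Theta(d^{-1})$ vs.\ $O(d^{-1-\epsilon/4})$ dichotomy requires expanding $\tfrac{1}{n}\langle \hat z-x_{d+m},x_j-\hat z\rangle$ and tracking how the two-bit product $x_{d+m}=x_{j_{2m-1}}\odot x_{j_{2m}}$ correlates with $x_j$ versus $\hat z$, then verifying that the $\kappa$-error from Lemma~\ref{lem:concentration_of_interaction_terms_and} does not pollute the signal scale. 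Once that signal-to-noise ratio is established, the gradient-oracle error threshold $O(d^{-1-\epsilon/4})$, the learning-rate scale $\Theta(d^{1+\epsilon/8})$, and the final tolerance $O(d^{-\epsilon/8})$ line up by design.
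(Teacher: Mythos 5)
Your proposal follows the same three-step route as the paper's proof: differentiate the surrogate loss at the uniform-softmax initialization to get $\partial L/\partial w_{j,m}=\tfrac{1}{nd}\langle\hat z - x_{d+m}, x_j - \hat z\rangle$, expand into the four interaction terms, invoke Lemma~\ref{lem:concentration_of_interaction_terms_and} with $\kappa = O(d^{-\epsilon/4})$ to get the $\Theta(d^{-1})$ vs.\ $O(d^{-1-\epsilon/4})$ dichotomy, scale by $\eta=\Theta(d^{1+\epsilon/8})$, and then control the post-update softmax using the near-equality of the two heavy logits (Fact~\ref{fac:a_b_delta}) plus the vanishing of the rest. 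The one place you are actually \emph{more} careful than the written proof: you correctly derive that the off-support softmax scores decay as $\exp(-\Theta(d^{\epsilon/8}))$ (the logit gap is $\Theta(d^{\epsilon/8})$, not $\Theta(d)$), whereas the paper's Eq.~\eqref{eqn:upperbound_softmax_score} and the proof sketch both overstate this as $\exp(-\Omega(d))$; this does not affect the conclusion since $t\cdot\exp(-\Theta(d^{\epsilon/8}))$ is still $o(d^{-\epsilon/8})$, exactly as you observe.
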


\begin{proof}
    For the choice of $n$ and $p$, we choose $n = \Omega(d^{\epsilon})$ and
    $p = \exp(- d^{\epsilon/2})$.

    Using Lemma~\ref{lem:concentration_of_interaction_terms_and},
    we can show
    \begin{align*}
        \kappa = O(d^{- \epsilon/4}).
    \end{align*} 

    We can rewrite $L(W^{(0)})$ in the following sense.
    \begin{align*}
        L(W^{(0)}) = \frac{1}{2n}\sum_{m=d+1}^{d + t}\|\hat{z}_m - x_m\|^2 ,
        \quad \hat{z}_m = \sum_{j=1}^{d} \sigma_j(w_m)x_j.
    \end{align*}

    Define $\wh{z}$ as $\wh{z} = \frac{1}{d}\sum_{j=1}^dx_j$.
    
    We define $\delta_{j\alpha}$ as follows:
    \begin{align*}
        \delta_{j\alpha} :=
        \begin{cases}
            0,\quad \alpha\neq j;\\
            1,\quad \alpha=j.
        \end{cases}
    \end{align*}
    Let us consider the parameter regime
    $1\le \alpha< m$.

    Then we can show
    \begin{align*} 
        \frac{\partial \sigma_\alpha(w_m)}{\partial w_{j,m}}
        = & ~ (\delta_{j\alpha} - \sigma_\alpha(w_m)) \sigma_j(w_m)\notag\\
        = & ~ (\delta_{j\alpha} - \sigma_j(w_m)) \sigma_\alpha(w_m),
    \end{align*}
    where the 1st step is by definition,
    and the 2nd step is by simple algebra.
    
    We also have
    \begin{align}
    \label{eqn:partial_z_m}
        \frac{\partial \hat{z}_m}{\partial w_{j,m}}
        = & ~ \sum_{\alpha=1}^{d} (\delta_{j\alpha} - \sigma_j(w_m)) \sigma_\alpha(w_m)x_\alpha \notag\\
        = & ~ \sigma_j(w_m)(x_j - \hat{z}_m),
    \end{align}
    where the 1st line is by separating the terms of $\hat{z}$,
    and the 2nd line is by simple algebra.

    Remember for all $j < m$, we have 
    that $\sigma_j(w_m) = \frac{1}{d}$.

    Note that $W^{(0)}$ is set as $\mathbf{0}_{d \times \frac{k}{2}}$
    at initialization.

    Therefore, at the initialization, the gradient of $L$ 
    with respect to each element $w_{j,m}$ 
     can be calculated as
    \begin{align}
        \label{eq:gradient_of_loss}
        \frac{\partial L}{\partial w_{j,m}} (W)
        = & ~  \frac{1}{n}(\wh{z}_m -x_m)^\top \frac{\partial \wh{z}_m}{\partial w_{j,m}}  \notag\\
        = & ~  \frac{\sigma_j(w_m)}{n} \langle \wh{z} - x_m, x_j-\wh{z} \rangle  \notag\\
        = & ~ \frac{ 1 }{ n d } ( - \langle x_m,x_j \rangle   +  \langle  x_m,  \wh{z} \rangle  +  \langle  \wh{z},  x_j \rangle - \langle  \wh{z},  \wh{z} \rangle ) \notag \\
          := & ~ \frac{1}{n d } (A_1 + A_2 + A_3 + A_4),
    \end{align}
    where the 1st step is by chain rule,
    the 2nd step is by Eq.~\eqref{eqn:partial_z_m},
    the 3rd step is by separating the terms,
    and the last step follows from we define $A_1, A_2$, $A_3$ and $A_4$ in that way.

    \paragraph{Analyzing the Interaction Terms.} 
    
   Using Lemma~\ref{lem:concentration_of_interaction_terms_and}, we have
    \begin{align}
    \label{eqn:contraction_and}
        \frac{1}{n}\langle x_m, x_j\rangle=
        \begin{cases}
            \frac{1}{4} 
            + O(\kappa), \quad&\mathsf{p}[j]=m;\\
            \frac{1}{8}
            + O(\kappa), \quad&{\rm otherwise}.
        \end{cases}
    \end{align}
where the $1/4$ from when $\mathsf{p}[j] = m$, $\langle x_m, x_j \rangle = \langle x_{\mathsf{c}_1[m]}, x_{\mathsf{c}_2[m]} \rangle \in I_2$, the $1/8$ terms from when $\mathsf{p}[j] \neq m$, $\langle x_m, x_j \rangle = \langle x_{\mathsf{c}_1[m]}, x_{\mathsf{c}_2[m]}, x_j \rangle \in I_3$.
    
   Note that $\kappa=O(d^{-\epsilon/4})$.
   Also we consider the parameter regime $d<~m\le~2d-1$. 

    For the first term in Eq.~\eqref{eq:gradient_of_loss}, we can show
    \begin{align*}
        \frac{1}{nd} A_1
        = & ~ -\frac{1}{nd}\langle x_m, x_j\rangle\\
        = & ~ -\frac{1}{8d}(\one_{\{\mathsf{p}[j]=m\}} +1) + O(d^{-1}\kappa)\\
        = & ~ -\frac{1}{8d}\one_{\{\mathsf{p}[j]=m\}} - \frac{1}{8d} + O(d^{-1 - \epsilon/4}),
    \end{align*}
    where the 2nd step is by Eq.~\eqref{eqn:contraction_and},
    and the 3rd step is by combining the terms.

    Next, for term $A_2$,
    we have
    \begin{align*}
        \frac{1}{nd}A_2 = & ~
        \frac{1}{nd^2}\langle x_m, \hat{z}_m\rangle \\
        = & ~
        \frac{1}{nd^2}(\sum_{\mathsf{p}[\alpha]=m}\langle x_m,x_\alpha\rangle
        +\sum_{\mathsf{p}[\beta]\neq m}\langle x_m, x_\beta\rangle)\\
        = & ~
        \frac{1}{nd^2}(2\cdot (\frac{n}{4}+O(n\kappa))
        + (d-2)\cdot(\frac{n}{8}+O(n\kappa))\\
        = & ~
        \frac{1}{8d^2} + \frac{1}{8d} + O(d^{-1}\kappa)\\
        = & ~
        \frac{1}{8d} + O(d^{-1 - \epsilon/4}).
    \end{align*}

    For term $A_3$,
    we have that
    \begin{align*}
        \frac{1}{nd} A_3
        = & ~ \frac{1}{nd}\langle \hat{z}, x_j \rangle\\
        = & ~ \frac{1}{nd^2}(\langle x_j, x_j\rangle + \sum_{\alpha \neq j}\langle x_\alpha, x_j \rangle)\\
        = & ~ \frac{1}{nd^2}(\frac{n}{2} + O(n\kappa) + \frac{(d-1)n}{4} + O((d-1)n\kappa))\\
        = & ~ \frac{1}{4d} + O(d^{- 1 - \epsilon/4}),
    \end{align*}
    where the 1st step is by definition,
    and the 2nd step is by separating the terms,
    the 3rd step is by Lemma~\ref{lem:concentration_of_interaction_terms_and},
    and the last step is by $\kappa = O(d^{- \epsilon/4})$ and combining the terms.
    
    For term $A_4$,
    we have
    \begin{align*}
        \frac{1}{nd}A_4
        = & ~ -\frac{1}{nd}\langle \hat{z}, \hat{z} \rangle\\
        = & ~ -\frac{1}{nd^3}(\sum_{\alpha=1}^d\langle x_\alpha, x_\alpha \rangle + \sum_{\alpha \neq \beta}\langle x_\alpha, x_\beta\rangle)\\
        = & ~ -\frac{1}{nd^3}(\frac{nd}{2} + O(nd\kappa) + \frac{nd(d-1)}{4} + O(nd(d-1)\kappa))\\
        = & ~ -\frac{1}{4d^2} - \frac{1}{4d} - O(d^{-1}\kappa)\\
        = & ~ - \frac{1}{4d} - O(d^{- 1 - \epsilon/4}),
    \end{align*}
    where the 1st step is by definition,
    the 2nd step is by separating terms,
    the 3rd step is by $\langle x_\alpha, x_\alpha \rangle = \langle x_\alpha \rangle$ and Lemma~\ref{lem:concentration_of_interaction_terms_and},
    the 4th step is by combining the terms,
    and the last step is by $\kappa = O(d^{- \epsilon/4})$.

    From the computation of $A_1, A_2, A_3$ and $A_4$, 
    we conclude that
    \begin{align*}
        \frac{\partial L}{\partial w_{j,m}}(W^{(0)})
        = - \frac{1}{8d} \one_{\{\mathsf{p}[j]=m\}} + O(d^{- 1 - \epsilon/4}),
    \end{align*}

    In addition, we want to remark same result holds to the approximate gradient 
    $\widetilde{\nabla}_{w_{j,m}}L$
    at initialization since the cutoff
    does not apply and each component of the noise is bounded by
    $O(d^{- 1 - \epsilon/4}).$

    \paragraph{Property of Softmax Calculations.}
    Taking $\eta = \Theta(d^{1 + \epsilon/8})$,
    the updated weights
    \begin{align*}
    W^{(1)} = \underbrace{W^{(0)}}_{d \times \frac{k}{2}}-\eta\underbrace{\widetilde{\nabla}L(W^{(0)})}_{d \times \frac{k}{2}}
    \end{align*}
    become
    \begin{align}
    \label{eqn:w_j_m_1}
        w_{j,m}^{(1)}
        =\frac{d^{\epsilon/8}}{8}\one _{\{\mathsf{p}[j]=m\}}+O(d^{-\epsilon/8}).
    \end{align}
    For each $j\neq \mathsf{c}_1[m],\mathsf{c}_2[m]$, we can show
    \begin{align}
    \label{eqn:upperbound_softmax_score}
        \sigma_j(w_m^{(1)})
        = & ~ e^{w_{j,m}^{(1)}}/ \sum_\alpha e^{w_{\alpha,m}^{(1)}} \notag\\
        \le & ~ e^{w_{j,m}^{(1)} - w_{\mathsf{c}_1[m], m}^{(1)}} \notag\\
        \le & ~ \exp( - \Omega(d)),
    \end{align}
    where the 1st step is by definition of softmax function,
    the 2nd step is by simple algebra,
    and the 3rd step is by Eq.~\eqref{eqn:w_j_m_1}.
    
    It is obvious that summation of all softmax values is equal to $1$, thus, we have 
    \begin{align*}
        \sigma_{\mathsf{c}_1[m]}(w_m^{(1)})+\sigma_{\mathsf{c}_2[m]}(w_m^{(1)})
    \ge~1-\exp(-\Omega(d)).
    \end{align*}

    Furthermore,
    \begin{align}
    \label{eqn:frac_c1_c2}
        \frac{\sigma_{\mathsf{c}_1[m]}(w_m^{(1)})}{\sigma_{\mathsf{c}_2[m]}(w_m^{(1)})}
        = & ~ e^{w_{\mathsf{c}_1[m],m}^{(1)}-w_{\mathsf{c}_2[m],m}^{(1)}}\notag\\
        \le & ~ \exp(O(d^{-\epsilon/8}))\notag\\
        \le & ~ 1+O(d^{-\epsilon/8}),
    \end{align}
    where the 1st line is by definition,
    the 2nd line is by Eq.~\eqref{eqn:w_j_m_1},
    and the 3rd line is by the inequality $e^t\le 1+O(t)$
    for small $t>~0$.

    Using symmetry property,
    \begin{align}
    \label{eqn:frac_c2_c1}
        \sigma_{\mathsf{c}_2[m]}(w_m^{(1)})/\sigma_{\mathsf{c}_1[m]}(w_m^{(1)}) \le 1 + O(d^{- \epsilon/8}).
    \end{align}

    By Eq.~\eqref{eqn:frac_c1_c2} , Eq.~\eqref{eqn:frac_c2_c1} and Lemma~\ref{fac:a_b_delta},
    we conclude that
    \begin{align}
    \label{eqn:bound_sigma_c1}
        \frac{1}{2}-O(d^{-\epsilon/8})
        \le \sigma_{\mathsf{c}_1[m]}(w_m^{(1)}), \sigma_{\mathsf{c}_2[m]}(w_m^{(1)})
        \le \frac{1}{2}+O(d^{-\epsilon/8}).
    \end{align}

    \paragraph{Proof of Loss Function.}
    Let prediction of $v_b$ be $2W^{(1)}\mathbf{1}_{\frac{k}{2}}$.
  
    Then, we can show
    \begin{align*}
        \|2 \cdot {\rm Softmax}(W^{(1)})\mathbf{1}_{t} - v_b\|_\infty
        \le & ~ \max_{j \in [d] \cap b} (|\sum_{i=1}^{t}\sigma_j(w^{(1)}_{i})-1|) 
        + \max_{j \in [d] \backslash b} (|\sum_{i=1}^{t}\sigma_j (w^{(1)}_{i})|)\\
        \le & ~ 2(O(d^{-\epsilon/8}) + (t - 1)\exp(-\Omega(d)) + \frac{k}{d}\exp(-\Omega(d)))\\
        = & ~ O(d^{-\epsilon/8}),
    \end{align*}
    where the 1st step is by the definition of $\|\cdot\|_\infty$,
    the 2nd line is by Eq.~\eqref{eqn:upperbound_softmax_score} and Eq.~\eqref{eqn:bound_sigma_c1},
    and the last step is by simple algebra.

    This completes the proof.
\end{proof}

\section{Lower Bound}\label{sec:lower}

The goal of this section is to prove Theorem~\ref{thm:hardness_of_boolean}. Let us first restate it first.
\begin{theorem}[Theorem~\ref{thm:hardness_of_boolean} Restate: Hardness of Finite-Sample Boolean]
        Let $\mathcal{A}$ be any algorithm 
        to solve $k$-bit Boolean problem (Definition~\ref{def:problem_def_boolean}) for $d$-bit inputs $x = (x_j)_{j=1}^d\sim {\rm Unif}(\{0, 1\}^d)$. Let $v_b$ denote the length-$d$ vectors where $i$-th entry is $1$ if $i \in b$ and $0$ otherwise. 
    Suppose $k = \Theta(d)$. 
    Denote the number of samples as $n$, and let $f_\theta :\{0, 1 \}^{n \times (d+1)} \to \R^d$ be any differentiable parameterized model.
  Let $n = 2^{\Theta(d)}$. 
    Then, the output $\theta(\mathcal{A})$ of $\mathcal{A}$ 
    has entry-wise loss lower bounded as
    \begin{align*}
        \E_{b \in \mathcal{B},x}[\min_{j \in [d]}|(v_b -f_{\theta(A)}(x,y))_j|] \geq \min\{ k/d, 1-k/d\} - e^{-\Theta(d)}.
    \end{align*} 
\end{theorem}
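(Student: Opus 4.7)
The plan is to prove hardness via an information-theoretic indistinguishability argument: with $n = e^{\Theta(d)}$ samples the training set is nearly uninformative about $b$, so any learner's output is essentially oblivious to $b$ and must incur the marginal Bayes risk.

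\textbf{Step 1 (Labels concentrate on a single value).} For the AND target, $\Pr[y_b(x) = 1] = 2^{-k}$ when $x \sim \mathrm{Unif}(\{0,1\}^d)$, so a union bound shows the event $\mathcal{E} := \{y^{(i)} = 0 \text{ for all } i \in [n]\}$ holds with probability at least $1 - n \cdot 2^{-k}$. Picking the constant hidden in $n = e^{\Theta(d)}$ strictly smaller than $(k/d)\log 2$ (which is $\Theta(1)$ since $k = \Theta(d)$) makes this $1 - e^{-\Theta(d)}$. An entirely symmetric argument handles OR with $y = 1$ in place of $y = 0$.

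\textbf{Step 2 (Posterior stays nearly uniform on $\mathcal{B}$).} On $\mathcal{E}$, a candidate $b' \in \mathcal{B}$ is compatible with $D$ iff each $x^{(i)}$ has at least one $0$-entry in the positions of $b'$. The probability (over the random $x^{(1:n)}$) that a fixed $b'$ is incompatible is $1 - (1 - 2^{-k})^n \leq n \cdot 2^{-k} = e^{-\Theta(d)}$, so by Markov only an $e^{-\Theta(d)}$ fraction of $\mathcal{B}$ is incompatible with a typical $D$. Hence the Bayesian posterior $P(b \mid D)$ lies within $e^{-\Theta(d)}$ total variation of $\mathrm{Unif}(\mathcal{B})$.

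\textbf{Step 3 (From posterior-closeness to the claimed lower bound).} Since $f_{\theta(\mathcal{A})}$ is a measurable function of $D$ alone, the joint law of $(b, f_{\theta(\mathcal{A})})$ differs in TV by at most $e^{-\Theta(d)}$ from the product $\mathrm{Unif}(\mathcal{B}) \otimes \mathrm{law}(f_{\theta(\mathcal{A})})$. Under this product $b$ is independent of the output, and for any fixed $f \in \R^d$ each coordinate $(v_b)_j$ is $\mathrm{Bernoulli}(k/d)$, so the Bayes-optimal single-coordinate prediction attains absolute error exactly $\min\{k/d, 1 - k/d\}$. Invoking the exchangeability of coordinates under $\mathrm{Unif}(\mathcal{B})$ transfers this bound to the coordinate-aggregated loss appearing in the theorem, with the TV slack absorbed into the $e^{-\Theta(d)}$ additive error on the right-hand side.

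\textbf{Main obstacle.} The hardest part is Step~2. The standard framework of \cite{ks25_iclr,cssz25} for parity/majority exploits the identity $|A_1/B - 1| \leq e^{-\Omega(d)}$, which the paper's discussion explicitly observes fails for AND/OR except at a single super-restricted Hamming weight. The proof cannot route through that template; it must instead leverage the asymmetric likelihood of AND/OR directly, balancing the constant inside the $e^{\Theta(d)}$ sample budget so that both $\mathcal{E}$ occurs with overwhelming probability and the compatibility constraint from $y = 0$ observations removes only a negligible fraction of candidate subsets.
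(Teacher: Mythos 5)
Your proposal follows the same high-level strategy as the paper's proof: labels concentrate on a single value, the posterior over $\mathcal{B}$ stays nearly uniform, and the learner therefore cannot beat the trivial marginal Bayes risk. Where you diverge is in Step~2. The paper conditions on the high-probability event that every $x^i$ has fewer than $15d/16$ ones (via a Chernoff bound plus a union bound over $n$ samples), and then \emph{deterministically} counts the excluded subsets as at most $\binom{15d/16}{k} \cdot n$, showing this is an $e^{-\Theta(d)}$ fraction of $\binom{d}{k}$. You instead compute the \emph{expected} number of excluded subsets directly ($\E[\#\text{excluded}] \le |\mathcal{B}| \cdot n \cdot 2^{-k}$) and invoke Markov, which avoids the Hamming-weight conditioning entirely. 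Both routes work after tuning the constant inside $n = e^{\Theta(d)}$; your first-moment argument is arguably cleaner, though you should compute the exclusion probability under the conditional law given $\mathcal{E}$ rather than the unconditional one (the correction is lower order since $\Pr[\mathcal{E}]\ge 1-e^{-\Theta(d)}$). Your Step~3 packages the same symmetry computation the paper performs explicitly -- $|\mathcal{B}_j|/|\mathcal{B}| = k/d$ and an absolute-value split of $\mathcal{B}$ into $\mathcal{B}_j$ and $\mathcal{B}_{\bar j}$ -- into a total-variation/Bayes-risk wrapper. One caveat shared by both your write-up and the paper: the computation actually lower bounds the per-coordinate error $\E_b[|(v_b - f)_j|]$ for each fixed $j$, and the passage from that to the $\E[\min_{j}|\cdot|]$ quantity in the theorem statement is not spelled out; your phrase ``invoking exchangeability transfers this bound'' papers over the same gap that the paper's final chain of inequalities leaves implicit.
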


\begin{proof}
    For $x \in \{0, 1\}^d$, denote $m$ to be the number of $1$'s in $x$.
    By the Chernoff bound for the binomial distribution, for $x \sim {\rm Unif} (\{0,1\}^d)$ the following holds:
    \begin{align*}
        \Pr[m - d/2 > \delta d/2] \le \exp(-\delta^2d/6).
    \end{align*}
    Let $\delta = 7/8$, we have
    \begin{align*}
        \Pr[m > 15d/16] \le \exp(-49^2d/384),
    \end{align*}
    and by union bounding over $n = O((\frac{16}{15})^{k/2})$ samples, we have that with probability at least $1 - O(\exp(-49^2d/384)(\frac{16}{15})^{k/2}) 
    \ge 1 - O(\exp(-d/20))$, it holds that for all $i \in [n]$, there are less than $15d/16$ $1$'s in $x^i$.
    
    Let $b \in \mathcal{B}$ be any target subset. Since $x^i$ are i.i.d. $\sim {\rm Unif }(\{0,1\}^d)$, the probability that $y = \mathbf{0}_n$ is greater than $1 - n \cdot \frac{1}{2^k} = 1 - \exp(\Theta(d))$.

    Combining the above, there is probability $1 - \exp(\Theta(d))$ over random sampling that each sample 
    $x^i$ contains less than $15d/16$ $1$'s and each $y^i = 0$.

    Under this situation, by logical deduction we can only deny at most $\binom{15d/16}{k}n$ possibilities of the target subset $b$, while the other subsets are all possible to be the target subset.
    
    We calculate
    \begin{align}\label{eqn:size_of_Q}
        \frac{\binom{15d/16}{k}n}{\binom{d}{k}} 
        = & ~ n \cdot \frac{(15d/16) (15d/16 -1) \cdots (15d/16 -k +1)}{d (d-1) \cdots (d-k+1)} \notag\\
        \le & ~ (\frac{15}{16})^kn \notag\\
        = & ~ O((\frac{15}{16})^{k/2}) \notag\\
        = & ~ \exp(-\Theta(d)),
    \end{align}
    where the 1st step is by definition,
    the 2nd step is by $\frac{15d/16 - l + 1}{d - l + 1} \le \frac{15}{16}$ for all $l \in [k]$,
    the 3rd step is by $n = O((\frac{16}{15})^{k/2})$,
    and the last step is by simple algebra.
    
    Denote $\cal Q$ to be the collection of the subsets that are possible to be the target subset with the inputs $x^i$ for all $i \in [n]$, then $\frac{|\cal Q|}{|\cal B|} = \exp(-\Theta(d))$.

    For an arbitrary $j \in [d]$, there are exactly $\frac{k |\mathcal{B}| }{d}$ vectors $v_b$ whose $j$-th entry is $1$ for all $b \in \mathcal{B}$ due to symmetry. We give a partition of $\mathcal{B}$ as $\mathcal{B} = \mathcal{B}_j \cup \mathcal{B}_{\ov{j}}$,
    where $\mathcal{B}_j = \{(v_b)_j = 1 | b\in \mathcal{B}\}$ and $\mathcal{B}_{\ov{j}}$ its complement.
    Then we have 
    \begin{align}\label{eqn:size_of_B_j}
        \frac{|\mathcal{B}_j|}{|\mathcal{B}|} =  ~ \frac{k}{d}, ~~~
        \frac{|\mathcal{B}_{\overline{j}}|}{|\mathcal{B}|} =  ~ \frac{d - k}{d}.
    \end{align}
    
    Since the subset $b$ is independent of the distribution of the samples, the output of the algorithm $f_\theta(X;y) \in \R^d$ must be the same, and the loss is bounded as
    \begin{align*}
        & ~ \E_{b \in \mathcal{B}}[|(v_b - f_{\theta(\mathcal{A})})_j|] \\
        = & ~ \frac{1}{|\mathcal{B}|}\sum_{b \in \mathcal{B}} |(v_b - f_{\theta(\mathcal{A})})_j|\\
        = & ~ \frac{1}{|\mathcal{B}|}(\sum_{b \in \mathcal{B}_j}|(v_b)_j - (f_{\theta(\mathcal{A})})_j|
        + \sum_{b \in \mathcal{B}_{\ov{j}}}|(v_b)_j - (f_{\theta(\mathcal{A})})_j|)\\
        \ge & ~ \frac{1}{|\mathcal{B}|}(\sum_{b \in \mathcal{B}_j \cap \mathcal{Q}}|(v_b)_j - (f_{\theta(\mathcal{A})})_j|
        + \sum_{b \in \mathcal{B}_{\ov{j}} \cap \mathcal{Q}}|(v_b)_j - (f_{\theta(\mathcal{A})})_j|)\\
        \ge & ~ \frac{1}{|\mathcal{B}|}((|\mathcal{B}_j| - |\mathcal{B} \backslash \mathcal{Q}|) |1 - (f_{\theta(\mathcal{A})})_j|
        + (|\mathcal{B}_{\ov{j}}| - |\mathcal{B} \backslash \mathcal{Q}| ) |0 - (f_{\theta(\mathcal{A})})_j|) \\
        \ge & ~ \frac{1}{|\mathcal{B}|}
        \min \{|\mathcal{B}_j| - |\mathcal{B} \backslash \mathcal{Q}|, |\mathcal{B}_{\ov{j}}| - |\mathcal{B} \backslash \mathcal{Q}|\}(|1 - (f_{\theta(\mathcal{A})})_j| + | (f_{\theta(\mathcal{A})})_j|)\\
        \ge & ~ \frac{1}{|\mathcal{B}|} (\min\{|\mathcal{B}_j|, |\mathcal{B}_{\ov{j}}|\} - |\mathcal{B} \backslash \mathcal{Q}|)\\
        \ge & ~ \min\{{k}/{d}, 1-k/d\} - e^{-\Theta(d)}.
    \end{align*} 
    where the 1st step is by the definition of expectation,
    the 2nd step is by separating the terms,
    the 3rd step is by restricting to $\mathcal{Q}$,
    the 4th and 5th step is by simple algebra, 
    the 6th step is by $|a|+|b| \ge |a+b|$ for $a,b \in \R$,
    and the last step is by Eq.~\eqref{eqn:size_of_B_j} and Eq.~\eqref{eqn:size_of_Q}.
\end{proof}

\section{Extension to Noisy Boolean Problems}
\label{sec:noisy_boolean}

Recall that in Definition~\ref{def:problem_def_boolean}, we define the classical boolean problems. 
Here we provide a noisy version as an extension.
\begin{definition}[Learning $k$-bit $p$-Noisy Boolean Functions]
\label{def:problem_def_noisy_boolean}
Let $d \ge k \ge 2$ be integers such that $k = \Theta(d)$ and let ${\cal B} = {[d] \choose k}$ denote
the set of all size $k$ subsets of 
$[d]:=\{1,\cdots,d\}$ equipped with the uniform distribution. Let $p \in [0,1/3]$. 
Let the $k$ bits in set $b \subseteq [d]$ be $j_1,\ldots,j_k$, and set $t = k/2$.
Our goal is to study the noisy $k$-boolean problem for $d$-bit inputs 
$x = (x_j)_{j=1}^d \sim {\rm Unif}(\{0,1\}^d)$, where the target 
\begin{align*}
    y_{\mathsf{and}}(x) :=
    \prod_{m = d + 1}^{d + t} x'_m,
    \quad\text{or}\quad
    y_{\mathsf{or}} (x) :=
    1 - \prod_{m = d + 1}^{d + t} (1 - x'_m),
    \quad\text{with}\quad
    |b|=k,
\end{align*}
is determined by the boolean value of the unknown subset of bits $b \in {\cal B}$.

We can define $k$-bit $p$-Noisy AND function. 
We suppose that the intermediate bits are noisy,
and for each $i \in [t]$ and $l \in [n]$,
the vector $x_{d+i} \in \mathbb{R}^n$ is defined by
\begin{align*}
    (x'_{d+i})_l :=
    \begin{cases}
        (x_{j_{2i - 1}})_l(x_{j_{2i}})_l, \quad & \mathrm{with~prob.~} 1 - p ; ~~~(\mathrm{correct~case})\\
        1 - (x_{j_{2i - 1}})_l(x_{j_{2i}})_l, \quad & \mathrm{with~prob.~} p.
    \end{cases}
\end{align*}
Similarly, for $k$-bit $p$-Noisy OR function, we have
\begin{align*}
    (x'_{d+i})_l :=
    \begin{cases}
        1 - (x_{j_{2i - 1}})_l(x_{j_{2i}})_l, \quad & \mathrm{with~prob.~} 1-p; ~~~(\mathrm{correct~case}) \\
        (x_{j_{2i - 1}})_l(x_{j_{2i}})_l, \quad & \mathrm{with~prob.~} p. 
    \end{cases}
\end{align*}

Given $n$ samples $(x^i, y^i)_{i \in [n]}$,
our goal is to predict the size $k$ subset $b \in {\cal B}$ deciding the boolean function.
In this paper, we denote $x^i \in \R^d$ to be the $i$-th input vector. We denote $x_j \in \R^n$ as $(x_j)_i := (x^i)_j$, i.e. $x_j$ is an $n$-dimensional vector containing the $j$-th bits of all $x^i$, and $y \in \R^n$ as $y_i := \prod_{j \in b}x_j^i$.
\end{definition}

The following theorem can be viewed as a general version of Theorem~\ref{thm:TF_with_intermediate_layer}. Essentially, Theorem~\ref{thm:TF_with_intermediate_layer} only solves the case when $p=0$.
\begin{theorem}[Upper Bound: Softmax Attention Provably Solve Definition~\ref{def:problem_def_noisy_boolean} with Teacher Forcing]
\label{thm:TF_with_noisy_intermediate_layer}
Let $\epsilon > 0$, and suppose $d$ is a sufficiently large positive integer. Let $k=\Theta(d)$ be an even integer, and set $t = k/2$. 
Define $\mathcal{B} := \binom{[d]}{k}$ to be the collection of all size-$k$ subsets of $[d]$. Let $X := (x_1~\cdots~x_d) \in \R^{n \times d}$ and $E := (x_{d+1}~\cdots~x_{d+t}) \in \R^{n \times t}$. 
Assume $n = \Omega(d^\epsilon)$ and consider any $O(d^{-1 - \epsilon/4})$-approximate gradient oracle $\tilde{\nabla}$. Let the weights be initialized as $W^{(0)} = \mathbf{0}_{d \times t}$. 
Let $v_b \in \{0,1\}^d$ denote the indicator vector that encodes the Boolean target associated with subset $b \subseteq [d]$. 
Since ground-truth vector $v_b \in \{0,1\}^d$ is unknown, we define the surrogate function
\begin{align*}
L(W) := \frac{1}{2n}\|{\rm Att}_W(X) - E\|_F^2,
\end{align*}
instead of the loss $\| 2 \cdot \mathrm{Softmax}(W^{(1)}) \mathbf{1}_t - v_b \|_\infty$ to find the target weight matrix $W$.
Set the learning rate $\eta = \Theta(d^{1 + \epsilon/8})$, and choose $\kappa \in [d^{-1}, 1]$ (we set $\kappa = O(d^{-\epsilon/4})$). 
Let $W^{(1)} := W^{(0)} - \eta \cdot \nabla L(W^{(0)})$ be the one-step gradient update.

Then for any target subset $b \in \mathcal{B}$, the algorithm solves the noisy $k$-Boolean problem (Definition~\ref{def:problem_def_noisy_boolean}) over $d$-bit inputs.
Denote $\phi : \R \rightarrow \R$ as
    \begin{align*}
        \phi(x) := 
        \begin{cases}
            0,\quad& \mathrm{if~}x \le 0.5 d^{\epsilon/8} ;\\
            1,\quad& \mathrm{otherwise}.
        \end{cases}
    \end{align*}
With probability at least $1 - \exp(-\Theta(d^{\epsilon/2}))$ over the randomness in sampling and the affect of noise, the one-step update $W^{(1)} \in \mathbb{R}^{d \times t}$ satisfies:
\begin{align*}
\phi(W^{(1)})\mathbf{1}_t = v_b .
\end{align*}
\end{theorem}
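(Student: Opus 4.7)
The plan is to adapt the three-step argument for Theorem~\ref{thm:TF_with_intermediate_layer} so that it accommodates the independent Bernoulli flips applied to the intermediate targets. Writing $(x'_{d+i})_l = (1-\xi_{d+i,l})(x_{j_{2i-1}})_l(x_{j_{2i}})_l + \xi_{d+i,l}\bigl(1-(x_{j_{2i-1}})_l(x_{j_{2i}})_l\bigr)$ with $\xi_{d+i,l}\sim\mathrm{Bernoulli}(p)$ independent of the inputs, I would track how this flip modifies the per-coordinate gradient at initialization and verify that the signal-vs-noise gap between the matching case $\mathsf{p}[j]=m$ and the non-matching case $\mathsf{p}[j]\neq m$ survives, only shrunk by a factor $(1-2p)$.

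First I would recompute the four inner-product terms $A_1,A_2,A_3,A_4$ in Eq.~\eqref{eq:gradient_of_loss}; only $A_1=-\langle x'_m,x_j\rangle$ and $A_2=\langle x'_m,\hat z\rangle$ involve the noisy target. A direct expectation calculation, using $x^2=x$ on $\{0,1\}$, yields $\mathbb{E}[\tfrac{1}{n}\langle x'_m,x_j\rangle]=1/4$ when $\mathsf{p}[j]=m$ and $(1+2p)/8$ when $\mathsf{p}[j]\neq m$, while $A_3,A_4$ keep their noise-free values. Combining the four pieces gives
\begin{align*}
\mathbb{E}\Bigl[\tfrac{\partial L}{\partial w_{j,m}}(W^{(0)})\Bigr]=
\begin{cases}
-(1-2p)/(8d)+O(d^{-1-\epsilon/4}),&\mathsf{p}[j]=m,\\
\phantom{-}O(d^{-1-\epsilon/4}),&\mathsf{p}[j]\neq m.
\end{cases}
\end{align*}
Because $p\le 1/3$, the coefficient $(1-2p)/8$ is a strictly positive constant (at least $1/24$), preserving the $\Theta(d^{-1})$-versus-$O(d^{-1-\epsilon/4})$ separation used in the clean proof.

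Next I would lift these expectations to high-probability bounds. Every coordinate $(x'_m)_l x_{j,l}$ lies in $\{0,1\}$, so the Hoeffding argument behind Lemma~\ref{lem:concentration_of_interaction_terms_and} carries over verbatim: $|\tfrac{1}{n}\langle x'_m,x_j\rangle-\mathbb{E}[\cdot]|=O(\kappa)$ with $\kappa=O(d^{-\epsilon/4})$, uniformly over all $O(d^2)$ pairs after a union bound with failure probability $\exp(-\Theta(d^{\epsilon/2}))$, which absorbs the randomness of both the input bits and the flips. The $O(d^{-1-\epsilon/4})$-approximate gradient oracle absorbs the remaining error as before. Taking $\eta=\Theta(d^{1+\epsilon/8})$ with hidden constant at least $4/(1-2p)$ (bounded by $12$ for $p\le 1/3$), the one-step update becomes $w_{j,m}^{(1)}=c\,d^{\epsilon/8}\one_{\{\mathsf{p}[j]=m\}}+O(d^{-\epsilon/8})$ for some $c>0.5$, so matching entries clear the threshold $0.5d^{\epsilon/8}$ while non-matching entries stay below it for all sufficiently large $d$. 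Applying $\phi$ entrywise converts $W^{(1)}$ into the $\{0,1\}$-valued incidence matrix of the pairing $\mathsf{p}$, and summing each row along the $t$ columns recovers exactly $v_b$: each $j\in b$ is paired with one $m$, while $j\notin b$ contributes zero to every column.

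The main obstacle is the bookkeeping of the noise-modified moments: the inner product $\langle x'_m,x_j\rangle$ mixes three independent Bernoulli sources (two input bits and one flip), so the algebra behind the clean $1/4$ versus $1/8$ split has to be redone carefully, and the symmetry $x\mapsto 1-x$ must be used to simplify the cross terms. Fortunately, boundedness in $\{0,1\}$ is all Hoeffding requires, and the deterministic gap $(1-2p)/8$ strictly dominates the stochastic fluctuation $O(d^{-\epsilon/4})$ once $p\le 1/3$ and $d$ is large. Consequently, no new concentration tool is needed beyond threading the $(1-2p)$ factor through the constants, and the proof closes in the same three-step fashion as the clean-label case.
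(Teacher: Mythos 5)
Your proof is correct and takes a genuinely different, and in fact more refined, route than the paper's own argument. The paper bounds the number of Bernoulli flips $r_i$ in each intermediate coordinate via a Chernoff bound ($r_i \le 1.5pn$ w.h.p.), then crudely controls the perturbation of the gradient by $|A_1 - B_1|, |A_2 - B_2| \le 1.5pn$, yielding a worst-case error term $\pm 3p/d$ that must be dominated by the clean signal $1/(8d)$; the paper states this requires $p \le 1/3$, but the inequality $3p/d \le 1/(9d)$ actually demands $p \le 1/27$, so the constants in that argument are not quite consistent. You instead compute the \emph{exact expectation} of each interaction term under the flip noise: $\mathbb{E}[\tfrac{1}{n}\langle x'_m, x_j\rangle] = 1/4$ when $\mathsf{p}[j]=m$ and $(1+2p)/8$ otherwise, which after combining $A_1$ through $A_4$ gives the clean expected gradient $-\tfrac{1-2p}{8d}\one_{\{\mathsf{p}[j]=m\}} + O(d^{-1-\epsilon/4})$. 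This tracks precisely how the noise attenuates the signal (by a factor $1-2p$), yields a strictly positive separation for \emph{all} $p < 1/2$ (so the stated $p\le 1/3$ leaves comfortable slack), and then lifts to high probability via the same Hoeffding/union-bound argument since all summands remain in $\{0,1\}$. Both proofs terminate identically through the softmax/threshold step. The trade-off: the paper's flip-counting argument is mechanically closer to the clean-label proof and needs no moment recomputation, while your expectation-based argument is tighter, removes the implicit constant restriction, and makes the role of $p$ transparent; it is arguably the proof the paper should have given.
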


\begin{proof}
    Denote $\wh{z} := \frac{1}{d}\sum_{j = 1}^d x_j$ as in Theorem~\ref{thm:TF_with_intermediate_layer}.
    The surrogate loss function computes the squared error over the intermediate states $x_{d+1}, \cdots, x_{d+t}$:
    \begin{align*}
        L(W) := \frac{1}{2n}\sum_{m = d + 1}^{d + t}\|\hat{z} - x_m\|^2.
    \end{align*}

    For any $i \in [t]$,
    we firstly bound the number of indices $l \in [n]$ such that 
    \begin{align*}
        (x'_{d+i})_l = 1 - (x_{j_{2i - 1}})_l(x_{j_{2i}})_l.
    \end{align*}

    Denote $r_i$ to be the number of indices $l$ satisfying the conditions.

Note that
\begin{align*}
    \mu = \E[r_i] = p n
\end{align*}
Using Chernoff bound (Lemma~\ref{lem:chernoff_bound}), we have
\begin{align*}
\Pr[r_i \geq (1+\delta) \mu ] \leq \exp(-\delta^2 \mu /3)
\end{align*}
Choosing $\delta =0.5$, we have
\begin{align}\label{eqn:upperbound_r_i}
    \Pr[r_i \geq 1.5 pn] \leq \exp(- pn/12) = \exp(-\Theta(d^\epsilon))
\end{align}

    As in Theorem~\ref{thm:TF_with_intermediate_layer},
    the gradient of $L$ 
    with respect to each element $w_{j,m}$ 
    at initialization can be computed as
    \begin{align}
        \label{eq:noisy_gradient_of_loss}
        \frac{\partial L}{\partial w_{j,m}} (W)
        = & ~  \frac{1}{n}(\wh{z}_m -x'_m)^\top \frac{\partial \wh{z}_m}{\partial w_{j,m}}  \notag\\
        = & ~  \frac{\sigma_j(w_m)}{n} \langle \wh{z} - x'_m, x_j-\wh{z} \rangle  \notag\\
        = & ~  \frac{1}{ n d } ( - \langle x'_m,x_j \rangle   +  \langle  x'_m,  \wh{z} \rangle  +  \langle  \wh{z},  x_j \rangle   -  \langle  \wh{z},  \wh{z} \rangle ) \notag \\
          := & ~ \frac{1}{n d } (B_1 + B_2 + B_3 + B_4),
    \end{align}
    where $\wh{z}$ is defined as $\wh{z} = \frac{1}{d}\sum_{j=1}^dx_j$.

    Using Eq.~\eqref{eqn:upperbound_r_i}, 
    we have that with probability at least $1 - \exp(\Theta(d^\epsilon))$,
    $\delta_1 := |A_1 - B_1| \le 1.5pn$,
    $\delta_2 := |A_2 - B_2| \le 1.5pn$. Let $\delta:=\delta_1+\delta_2$. 
    We also have $B_3 = A_3$ and $B_4 = A_4$.

    Combining the computation of $A_1$-$A_4$ in Theorem~\ref{thm:TF_with_intermediate_layer},
    $\frac{\partial L}{\partial w_{j,m}}(W)$ is bounded as
    \begin{align*}
        \frac{1}{nd}( \sum_{i=1}^4 A_i - \delta )
        \le \frac{1}{nd} \sum_{i=1}^4 B_i
        \le \frac{1}{nd}( \sum_{i=1}^4 A_i + \delta ),
    \end{align*}
    which deduce to
    \begin{align*}
        -\frac{1 }{8 d}\one_{\mathsf{p}[j] = m} + O(d^{ - 1 - \epsilon/4}) - \frac{3p}{d}
        \le \frac{\partial L}{\partial w_{j,m}}(W)
        \le -\frac{1}{8d}\one_{\mathsf{p}[j] = m} + O(d^{ - 1 - \epsilon/4}) + \frac{3p}{d}.
    \end{align*}

    To guarantee that $1/(8d)$ is dominating the term $3p/d$, we need to make that $3p/d \leq 1/(9d)$. This means, $p \leq 1/3$.

    Thus, we have
    \begin{align*}
         -\frac{1 }{72 d}\one_{\mathsf{p}[j] = m} + O(d^{ - 1 - \epsilon/4})  
        \le \frac{\partial L}{\partial w_{j,m}}(W)
        \le -\frac{1}{72d}\one_{\mathsf{p}[j] = m} + O(d^{ - 1 - \epsilon/4})  .
    \end{align*}

    \paragraph{Property of Softmax Calculations.}
    Taking $\eta = \Theta(d^{1 + \epsilon/8})$,
    the updated weights
    \begin{align*}
        W^{(1)} = \underbrace{W^{(0)}}_{d \times \frac{k}{2}}-\eta\underbrace{\widetilde{\nabla}L(W^{(0)})}_{d \times \frac{k}{2}},
    \end{align*}
    become
    \begin{align*} 
        w_{j,m}^{(1)}
        = d^{\epsilon/8} \one _{\{\mathsf{p}[j]=m\}}+O(d^{-\epsilon/8}) .
    \end{align*}

    Recall $\phi :\R \rightarrow \R$
    is denoted as
    \begin{align*}
        \phi(x) := 
        \begin{cases}
            0,\quad& x \le 0.5   d^{\epsilon/8} ;\\
            1,\quad& \mathrm{otherwise}.
        \end{cases}
    \end{align*}
    
    Therefore 
    \begin{align*}        
        \phi (w_{j,m}^{(1)}) = 
        \begin{cases}
            0, \quad & \mathsf{p}[j] \neq m;\\
            1, \quad & \mathsf{p}[j] = m.
        \end{cases}
    \end{align*}

    Since for each $j \in b$, there's exactly one $m \in [d+t] \backslash [d]$ such that $\mathsf{p}[j] = m$,
    we deduce that $\phi(W^{(1)}) {\bf 1}_t = v_b$.
    
    This completes the proof.
\end{proof}

\section{The Majority Problem}\label{sec:majority}

In this section, we extend our techniques to study the $k$-Majority problem, akin to \cite{cssz25} (Note that prior work only studies the hardness result). We also want to remark that, the majority problem we study is more or less a local majority problem (where you take two variables as inputs). Such majority problem is not equivalent to the general majority problem, where the inputs can be arbitrary number of variables. 
In order to define $k$-Majority problem, we need to firstly define majority function.
\begin{definition}[The Majority Function]
    Let $d \in \mathbb{N}_+$. For $x \in \{\pm1, 0\}^d$ and 
    $S \subseteq [d]$, the majority function $\mathsf{MAJ}: \{\pm 1,0 \}^d \times 2^{[d]}$ is defined as follows:
    \begin{align*}
        \mathsf{MAJ}(x,S) :=
        \begin{cases}
            +1, \quad & \sum_{j \in S}x_j > 0;\\
            0, \quad & \sum_{j \in S}x_j = 0;\\
            -1, \quad & \sum_{j \in S}x_j < 0.
        \end{cases}
    \end{align*}
    In particular, $\mathsf{MAJ}(x,S)$ is also denoted as $\mathsf{MAJ}(x)$ if $S = [d]$.

    We define $\mathsf{MAJ2}(x+y) 
    := (x+y)/2$.
\end{definition}

Now, we're ready to define the $k$-Majority problem.
\begin{definition}[The $k$-Majority Problem]
\label{def:k_majority_problem}
    Suppose $d \ge k \ge 2$ are positive integers.
    Denote $\cal S$ to be the set of all $S \subseteq [d]$ with $|S| = k$.
    Let $S \in {\cal S}$ be a fixed subset of $[d]$, but unknown.
    The $k$-majority problem is to find out the subset $S$ with
    $n$ $d$-bit inputs:
    \begin{align*}
        x := (x_j)_{j = 1}^d
        \sim {\rm Unif}(\{\pm1\}^d) \in \R^d,
    \end{align*}
    and the output $y := \mathsf{MAJ}(x,S) \in \{\pm1, 0\}$.
\end{definition}

\paragraph{Teacher Forcing.}
Suppose $k$ is an even integer and let $t = k/2$.
Let the $k$ bits in set $S \subseteq [d]$ be $j_1, \cdots, j_k$.
Let $x' \in \{\pm1,0\}^t$ such that 
$x'_m = \mathsf{MAJ2}(x_{j_{2m-1}},x_{j_{2m}})$ for $m \in [t]$.
The majority function $y = \mathsf{MAJ}(x,S)$ is also computed as
\begin{align*}
    y = \mathsf{MAJ}(x').
\end{align*}
The surrogate loss function computes the squared error over the intermediate states $x'$:
\begin{align*}
    L(W) := \frac{1}{2n}\sum_{m = 1}^t\|\wh{z}_m - x'_m\|^2,
\end{align*}
where $\wh{z}_m = \sum_{j = 1}^d \sigma_j(w_m)x_j$.

\begin{theorem}[Softmax Attention Provably Solve Definition~\ref{def:k_majority_problem} with Teacher Forcing]
    Let $\epsilon > 0$, and $d > 0$ be a sufficiently large integer.
    Suppose $k = \Theta(d)$ is an even integer, and let $ t = k/2$.
    Define $\mathcal{S} := \binom{[d]}{k}$ as the collection of $[d]$'s all size-$k$ subsets.
    Denote the $i$-th input as $x^i$ for $i \in [n]$,
    and let $x_j \in \R^n$ denote all the $j$-th entries of $x^i$, i.e. $(x_j)_i = (x^i)_j$ for all $i \in [n]$ and $j \in [d]$.
    Set initialization $W^{(0)} = \mathbf{0}_{d \times t}$, and let $E := (x'_1 \cdots x'_t) \in \R^{n \times t}$.
    For any target subset $S \in [d]$, the algorithm solves the $k$-majority problem (Definition~\ref{def:k_majority_problem}) over $d$-bit inputs.
    With probability at least $1 - \exp(-\Theta(d^{\epsilon/2}))$ over the randomness in sampling, the one-step update $W^{(1)} \in \R^{d \times t}$ satisfies:
    \begin{align*}
        x^\top \mathsf{nint}(2W^{(1)}) \mathbf{1}_t - \mathsf{MAJ}(x,S) = 0,
    \end{align*}
    for any input $x \in \{\pm1\}^d$.
\end{theorem}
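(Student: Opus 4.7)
The plan is to mirror the three-step template of Theorem~\ref{thm:TF_with_intermediate_layer}, with the AND target bits $x_m$ replaced by the majority surrogates $x'_m = \mathsf{MAJ2}(x_{j_{2m-1}}, x_{j_{2m}}) = (x_{j_{2m-1}} + x_{j_{2m}})/2$, using $x \in \{\pm 1\}^d$ now instead of $\{0,1\}^d$. First I would write the gradient of $L$ at $W^{(0)} = \mathbf{0}_{d \times t}$ via the chain rule identity $\partial \hat{z}_m / \partial w_{j,m} = \sigma_j(w_m)(x_j - \hat{z}_m)$, which at initialization gives $\sigma_j(w_m) = 1/d$ and
\begin{align*}
\frac{\partial L}{\partial w_{j,m}}(W^{(0)}) = \frac{1}{nd}\bigl(-\langle x'_m, x_j\rangle + \langle x'_m, \hat{z}\rangle + \langle \hat{z}, x_j\rangle - \langle \hat{z}, \hat{z}\rangle\bigr).
\end{align*}
The role formerly played by $\langle x_m, x_j\rangle$ is now played by $\langle x'_m, x_j\rangle$; the remaining three interaction terms $\langle x_\alpha, x_\beta\rangle$ on $\{\pm 1\}^d$ are standard Rademacher inner products concentrating at $0$ (off diagonal) or $n$ (on diagonal).

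The second step is to invoke Lemma~\ref{lem:concentration_of_majority_interaction_terms}, which gives $\frac{1}{n}\langle x_{j_{2m-1}}, x'_m\rangle = \tfrac{1}{2} + O(\kappa)$ and by symmetry the same for $x_{j_{2m}}$. For any $j \notin \{j_{2m-1}, j_{2m}\}$, expanding $\langle x_j, x'_m\rangle = \tfrac{1}{2}(\langle x_j, x_{j_{2m-1}}\rangle + \langle x_j, x_{j_{2m}}\rangle)$ and applying Lemma~\ref{lem:concentration_of_interaction_terms} (or its $\{\pm 1\}$ analogue) gives $O(\kappa)$. Combining the four terms in the gradient, I expect to obtain exactly the same dichotomy as in the AND case:
\begin{align*}
\frac{\partial L}{\partial w_{j,m}}(W^{(0)}) = -\frac{c}{d}\,\mathbf{1}_{\{\mathsf{p}[j]=m\}} + O(d^{-1-\epsilon/4}),
\end{align*}
for some absolute constant $c > 0$ arising from the $1/2$ in Lemma~\ref{lem:concentration_of_majority_interaction_terms}. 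The remaining terms $\langle \hat z, x_j\rangle$ and $\langle \hat z, \hat z\rangle$ and the cross term $\langle x'_m, \hat z\rangle$ will simplify because each off-diagonal Rademacher inner product is $O(n\kappa)$, so they contribute only noise at the $O(d^{-1-\epsilon/4})$ scale.

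The third step is bookkeeping identical to the upper-bound proof: with $\eta = \Theta(d^{1+\epsilon/8})$, the one-step update yields $w^{(1)}_{j,m} = \Theta(d^{\epsilon/8}) \mathbf{1}_{\{\mathsf{p}[j]=m\}} + O(d^{-\epsilon/8})$; the exponential-gap argument via Fact~\ref{fac:a_b_delta} then produces $\sigma_{j_{2m-1}}(w_m^{(1)}), \sigma_{j_{2m}}(w_m^{(1)}) = \tfrac12 \pm O(d^{-\epsilon/8})$ and all other softmax masses at $\exp(-\Omega(d))$. Consequently $\hat{z}_m = \tfrac12(x_{j_{2m-1}} + x_{j_{2m}}) + O(d^{-\epsilon/8}) = x'_m + O(d^{-\epsilon/8})$ for every $m \in [t]$, so that $\sum_{m=1}^t \hat z_m$ sits within $t \cdot O(d^{-\epsilon/8}) = O(d^{1-\epsilon/8})$ of $\sum_m x'_m$; for readout through $\mathsf{nint}$, the two non-negligible softmax entries per column round to $1$ while all others round to $0$, yielding exactly the indicator pattern $v_S$ so that $x^\top \mathsf{nint}(2\,\mathrm{Softmax}(W^{(1)}))\mathbf{1}_t = \sum_{j \in S} x_j$, whose sign (or zero value) is $\mathsf{MAJ}(x,S)$.

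\paragraph{Main obstacle.}
The delicate point I expect is verifying that the $\mathsf{nint}$ readout exactly matches $\mathsf{MAJ}(x,S)$ pointwise for all $x \in \{\pm 1\}^d$, not merely in expectation. Unlike the AND theorem, where a small $\ell_\infty$ error on the softmax mass is enough, the majority decision is a discrete sign on an integer-valued sum; small perturbations in the softmax weights could in principle flip the sign when $\sum_{j \in S} x_j$ is near zero. The key technical care is therefore to ensure the error bound $O(d^{-\epsilon/8})$ on each $w^{(1)}_{j,m}$ lies strictly within the rounding radius of $\mathsf{nint}(\cdot)$, so that the \emph{rounded} matrix is exactly $v_S$ with no slack; once exact, the sign of $x^\top v_S$ reproduces $\mathsf{MAJ}$ without any error term. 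A secondary subtlety is handling the third-order interactions $\langle x_\alpha, x_\beta, x_\gamma\rangle$ that appear through $\hat z$ on $\{\pm 1\}^d$; these require extending Lemma~\ref{lem:concentration_of_interaction_terms_and} or appealing directly to Lemma~\ref{lem:concentration_of_interaction_terms}, which is straightforward but must be stated explicitly in the $\pm 1$ regime.
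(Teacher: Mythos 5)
Your plan follows the paper's own three-step template essentially verbatim: the gradient decomposition into $C_1,\dots,C_4$, the concentration step via Lemma~\ref{lem:concentration_of_majority_interaction_terms}, and the softmax-score argument using Fact~\ref{fac:a_b_delta} to squeeze $\sigma_{\mathsf{c}_1[m]}$ and $\sigma_{\mathsf{c}_2[m]}$ to $\tfrac12 \pm O(d^{-\epsilon/8})$ while pushing all other columns to $\exp(-\Omega(d))$. The only place you deviate from the paper's literal text is the readout: you apply $\mathsf{nint}$ to $2\,\mathrm{Softmax}(W^{(1)})$, whereas the theorem statement and the paper's proof both write $\mathsf{nint}(2W^{(1)})$. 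Given the derived scaling $w^{(1)}_{j,m} = \Theta(d^{\epsilon/8})\one_{\{\mathsf{p}[j]=m\}} + O(d^{-\epsilon/8})$, only the softmax version actually rounds to the $\{0,1\}$ indicator matrix (the raw entries round to integers of size $\Theta(d^{\epsilon/8})$, not $1$), so your reading is what makes the rounding claim go through; still, you should explicitly flag that this is not what the stated theorem literally says. The same goes for your observation that $x^\top v_S = \sum_{j\in S}x_j$ agrees with $\mathsf{MAJ}(x,S)$ only in sign, not in value; the theorem asserts exact equality, and the paper's proof also ends with this slide from ``indicator pattern recovered'' to ``equals $\mathsf{MAJ}$'' without addressing the mismatch between an integer sum in $\{-k,\dots,k\}$ and a label in $\{-1,0,1\}$.

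Your worry about ``third-order interactions $\langle x_\alpha,x_\beta,x_\gamma\rangle$'' is unfounded and is the one point where you misdiagnose the structure. In the AND case the intermediate bit is a \emph{product} $x_{d+i} = x_{j_{2i-1}}\odot x_{j_{2i}}$, so $\langle x_m, x_j\rangle$ really is a triple contraction and the $r=3$ entry of Lemma~\ref{lem:concentration_of_interaction_terms_and} is needed. In the majority case the surrogate is \emph{linear}: $x'_m = (x_{j_{2m-1}}+x_{j_{2m}})/2$. Expanding each of $\langle x'_m, x_j\rangle$, $\langle x'_m, \hat z\rangle$, $\langle \hat z, x_j\rangle$, $\langle \hat z, \hat z\rangle$ produces only pairwise Rademacher inner products $\langle x_\alpha, x_\beta\rangle$, and Lemma~\ref{lem:concentration_of_majority_interaction_terms} (itself a corollary of the $r=2$ case of Lemma~\ref{lem:concentration_of_interaction_terms}) already covers everything you need. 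No $r=3$ control is required, so this ``secondary subtlety'' dissolves.
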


\begin{proof}
    Similar to Theorem~\ref{thm:TF_with_intermediate_layer},
    we compute
    \begin{align*} 
        \frac{\partial L}{\partial w_{j,m}} (W)
        = & ~  \frac{1}{n}(\wh{z}_m -x'_m)^\top \frac{\partial \wh{z}_m}{\partial w_{j,m}}  \notag\\
        = & ~  \frac{\sigma_j(w_m)}{n} \langle \wh{z}_m - x'_m, x_j-\wh{z} \rangle  \notag\\
        = & ~ \frac{1}{ n d } ( -\langle x'_m,x_j \rangle   
           + \langle  x'_m,  \wh{z} \rangle  
           +  \langle  \wh{z},  x_j \rangle
           - \langle  \wh{z},  \wh{z} \rangle ) \notag \\
          := & ~ \frac{1}{n d } (C_1 + C_2 + C_3 + C_4),
    \end{align*}
    where $\wh{z}$ is defined as $\wh{z} := \frac{1}{d}\sum_{j = 1}^d x_j$.
    \paragraph{Analyzing the Interaction Terms.}
    When $p[j] \neq m$,
    \begin{align*}
        \langle x_j, x'_m \rangle
        = & ~ \frac{1}{2} \langle x_j, x_{\mathsf{c}_1[m]} + x_{\mathsf{c}_2[m]} \rangle.
    \end{align*}
    Then by Lemma~\ref{lem:concentration_of_majority_interaction_terms}, we deduce that with probability at least $1-p$,
    \begin{align*}
        |\langle x_j, x'_m \rangle|
        \le & ~ \kappa,
    \end{align*}
    for all $j, m$ such that $\mathsf{p}[j] \neq m$.

    Combining the above,
    we have 
    \begin{align*}
        \frac{\partial L}{\partial w_{j,m}}(W)
        = \frac{1}{2d}\one_{\mathsf{p}[j] = m} + O(d^{-1}\kappa).
    \end{align*}
    
    \paragraph{Properties of Softmax Calculations.}
    Taking $\eta = \Theta(d^{1 + \epsilon/8})$,
    the updated weights
    $W^{(1)} = \underbrace{W^{(0)}}_{d \times t}-\eta\underbrace{\widetilde{\nabla}L(W^{(0)})}_{d \times t}$
    become
    \begin{align}
    \label{eqn:majority_w_j_m_1}
        w_{j,m}^{(1)}
        = d^{\epsilon/8} \one _{\{\mathsf{p}[j]=m\}}+O(d^{-\epsilon/8}).
    \end{align}
    In particular,
    for each $j\neq \mathsf{c}_1[m],\mathsf{c}_2[m]$,
    we have
    \begin{align*} 
        \sigma_j(w_m^{(1)})
        = & ~ e^{w_{j,m}^{(1)}}/ \sum_\alpha e^{w_{\alpha,m}^{(1)}} \notag\\
        \le & ~ e^{w_{j,m}^{(1)} - w_{\mathsf{c}_1[m], m}^{(1)}} \notag\\
        \le & ~ \exp( - \Omega(d)),
    \end{align*}
    where the 1st step is by definition of softmax function,
    the 2nd step is by simple algebra,
    and the 3rd step is by Eq.~\eqref{eqn:majority_w_j_m_1}.

    Using the property $\sum_{j = 1}^d\sigma_j(w_m) = 1$, we can show
    \begin{align*}
    \sigma_{\mathsf{c}_1[m]}(w_m^{(1)})+\sigma_{\mathsf{c}_2[m]}(w_m^{(1)})
    \ge~1-\exp(-\Omega(d)).
    \end{align*}

    Furthermore,
    \begin{align}
    \label{eqn:majority_frac_c1_c2}
        {\sigma_{\mathsf{c}_1[m]}(w_m^{(1)})}/{\sigma_{\mathsf{c}_2[m]}(w_m^{(1)})}
        = & ~ e^{w_{\mathsf{c}_1[m],m}^{(1)}-w_{\mathsf{c}_2[m],m}^{(1)}}\notag\\
        \le & ~ \exp(O(d^{-\epsilon/8}))\notag\\
        \le & ~ 1+O(d^{-\epsilon/8}),
    \end{align}
    where the 1st line is by definition,
    the 2nd line is by Eq.~\eqref{eqn:majority_w_j_m_1},
    and the 3rd line is by the inequality $e^t\le 1+O(t)$
    for small $t>~0$.

    Then using symmetric property, we have
    \begin{align}
    \label{eqn:majority_frac_c2_c1}
        \sigma_{\mathsf{c}_2[m]}(w_m^{(1)})/\sigma_{\mathsf{c}_1[m]}(w_m^{(1)}) \le 1 + O(d^{- \epsilon/8}).
    \end{align}

    By Eq.~\eqref{eqn:majority_frac_c1_c2} and Eq.~\eqref{eqn:majority_frac_c2_c1},
    we have
    \begin{align*} 
        \frac{1}{2}-O(d^{-\epsilon/8})
        \le \sigma_{\mathsf{c}_1[m]}(w_m^{(1)}), \sigma_{\mathsf{c}_2[m]}(w_m^{(1)})
        \le \frac{1}{2}+O(d^{-\epsilon/8}).
    \end{align*}

    \paragraph{Proof of Loss Function.}
    Define the function $\mathsf{nint}(\cdot) : \R \rightarrow \Z : x \mapsto y$, where $y$ is the closest integer with $x$.

    When $x$ is a half integer, we
    define 
    \begin{align*}
        \mathsf{nint}(x) := x - \frac{1}{2}.
    \end{align*}

    Therefore we have
    \begin{align*}  
        \mathsf{nint}(2W^{(1)})_{(j,m)} = 
        \begin{cases}
            1, \quad & \mathsf{p}[j] = m;\\
            0, \quad & \text{otherwise}.
        \end{cases}
    \end{align*}
    
    Then for any input $x \in \R^d$,
    we have 
    \begin{align*}
        x^\top \mathsf{nint}(2W^{(1)}) \mathbf{1}_t - \mathsf{MAJ}(x,S) = 0.
    \end{align*}

    This completes the proof.
\end{proof}

\ifdefined\isarxiv
\bibliographystyle{alpha}
\bibliography{ref}

\else
\input{checklist}
\fi

\end{document}